\newtheorem{example}{Example}
\newtheorem{defn}{Definition}
\newtheorem{Problem}{Problem}
\newtheorem{prop}{Theorem}
\newtheorem{corr}{Corollary}
\newtheorem{rem}{Remark}
\def\beq{\begin{equation}}
\def\eeq{\end{equation}}
\newcommand{\br}{\mathbb{R}}
\newcommand{\bn}{\mathbb{N}}
\newcommand{\z}[3]{z_{#1}^{#2}(#3)}
\newcommand{\zr}[3]{r_{#1}^{#2}(#3)}
\newcommand{\y}[2]{y^{#1}(#2)}
\newcommand{\s}[2]{\pi_{#1}(#2)}
\newcommand{\uu}[2]{u_{#1}(#2)}
\newcommand{\w}[2]{w_{#1}(#2)}
\newcommand{\bigt}[1]{T_{#1}}
\newcommand{\traj}[1]{\pi_{#1}}
\newcommand{\ktn}[2]{{k_{#1}(#2)}}
\newcommand{\tildez}[3]{\tilde{z}^{#2}_{#1}(#3)}
\newcommand{\tildey}[2]{\tilde{y}^{#1}(#2)}
\newcommand{\bary}[2]{\bar{y}^{#1}(#2)}
\newcommand{\ra}{\rightarrow}
\newcommand{\Next}{\bigcirc}
\newcommand{\release}{\hspace{1mm}\mathcal{R}\hspace{1mm}}
\newcommand{\until}{\hspace{1mm}\mathcal{U}\hspace{1mm}}
\newcommand{\true}{\relax\ifmmode \mathit{True} \else \em True \/\fi}
\newcommand{\false}{\relax\ifmmode \mathit{False} \else \em False \/\fi}
\newcommand{\eventually}{\lozenge}
\newcommand{\always}{\Box}
\begin{document}

\title{Multirobot Coordination with Counting Temporal Logics}

\author{\IEEEauthorblockN{Yunus Emre Sahin\IEEEauthorrefmark{1},
Petter Nilsson\IEEEauthorrefmark{2},
Necmiye Ozay\IEEEauthorrefmark{1}
}\\
\IEEEauthorblockA{\IEEEauthorrefmark{1}
	Department of Electrical Engineering and Computer Science,\\
University of Michigan, Ann Arbor, MI 48109 USA,\\ 
\IEEEauthorrefmark{2}
Department of Mechanical and Civil Engineering,\\
California Institute of Technology, Pasadena CA 91125 USA\\}
\thanks{This work is supported in part by NSF grants CNS-1239037, CNS-1446298 and ECCS-1553873, and DARPA grant N66001-14-1-4045. Emails: {\tt\small \{ysahin,necmiye\}@umich.edu}, {\tt\small pettni@caltech.edu}.}}


\IEEEtitleabstractindextext{%
\begin{abstract}
  In many multirobot applications, planning trajectories in a way to guarantee that the collective behavior of the robots satisfies a certain high-level specification is crucial. Motivated by this problem, we introduce \emph{counting temporal logics}---formal languages that enable concise expression of multirobot task specifications over possibly infinite horizons. We first introduce a general logic called \emph{counting linear temporal logic plus} (cLTL+), and propose an optimization-based method that generates individual trajectories such that satisfaction of a given cLTL+ formula is guaranteed when these trajectories are synchronously executed. We then introduce a fragment of cLTL+, called counting linear temporal logic (cLTL), and show that a solution to planning problem with cLTL constraints can be obtained more efficiently if all robots have identical dynamics. In the second part of the paper, we relax the synchrony assumption and discuss how to generate trajectories that can be asynchronously executed, while preserving the satisfaction of the desired cLTL+ specification. In particular, we show that when the asynchrony between robots is bounded, the method presented in this paper can be modified to generate robust trajectories. We demonstrate these ideas with an experiment and provide numerical results that showcase the scalability of the method.
\end{abstract}

\begin{IEEEkeywords}
Multirobot systems, Formal methods, Path planning 
\end{IEEEkeywords}}

\maketitle

\IEEEdisplaynontitleabstractindextext

\IEEEpeerreviewmaketitle

\section{Introduction}

\todo[inline]{It would be a good idea to write about how solution times are sensitive to encoding methods in the introduction, to motivate why so many variants are introduced. Would also be good to add some intuitive explanations around the encodings, for a part there are just new encoding equations without too much explanation.}

\IEEEPARstart{M}{ultirobot} systems can serve modern societies in a variety of ways, ranging from pure entertainment \cite{arikan2001efficient,jolly2009bezier} to critical search and rescue missions \cite{nagatani2013emergency, macwan2015multirobot}, from construction automation \cite{petersen2011termes} to micromanipulation \cite{kantaros2018control}. {The number of robots required to achieve a common goal increases each day to improve the effectiveness and efficiency in such applications}. Therefore, there is a need for scalable tools to coordinate the collective behavior of large numbers of robots. In this paper, we introduce \emph{counting temporal logics} for specifying desired collective behavior of multirobot systems in a concise manner, and provide an optimization-based algorithm to synthesize trajectories that ensure the satisfaction of specifications given in this formalism. We show that counting temporal logics can capture meaningful and interesting multirobot tasks, and that the solution method proposed in this paper scales better with the number of robots than the existing methods. In fact, we show that our method scales to hundreds of robots under certain conditions. Moreover, we do not require robots to be synchronized perfectly or communicate during runtime. 

Traditional algorithms for multirobot coordination tend to focus on relatively simple tasks such as reaching a goal state while avoiding unsafe regions and collisions \cite{saha2016implan,yu2016optimal, wang2017safety}, or reaching a consensus \cite{hu2012robust, mesbahi2010graph}. Temporal logics, such as Linear Temporal Logic (LTL), provide a powerful framework for defining more complex specifications, for example: \emph{Always avoid collision with obstacles, do not cross into region A before visiting region B, and eventually visit regions A and C repeatedly}. Given requirements in a formal language, existing methods such as \cite{bhatia2010sampling,kress2009temporal,wongpiromsarn2010formal} can generate correct-by-construction trajectories for single-agent systems. The use of LTL specifications has also been considered for multirobot systems \cite{chen2012formal,guo2015multi,kloetzer2010automatic, moarref2017decentralized,raman2014reactive,ulusoy2012robust}. However, generalizations to multirobot systems suffer from the curse of dimensionality and cannot handle large numbers of robots. Furthermore, LTL does not provide a natural way to define group tasks, hence using LTL in multirobot settings results in long formulas, which are not desired as the complexity of the algorithms depend on the length of the formula.

Existing methods that use temporal logic to define multirobot specifications, such as \cite{guo2015multi,ulusoy2012robust}, require that each robot be assigned an independent task, a tedious and error-prone process when the number of robots is large. In many applications, completion of a task depends not on identities of robots, but on the number of robots satisfying a property. Take for example an emergency response scenario where hundreds of autonomous vehicles are deployed to locate and help the victims. In such a scenario, it is reasonable to assume that most of the vehicles would have identical capabilities and that the identity of the vehicle is not important to the rescuers, as long as the given tasks are accomplished. On the other hand, tasks might depend on the number of agents satisfying a property. For instance, one might require sufficiently many robots to surveil a particular area to look for victims. Or, one might need to limit the number of rescuers in certain regions to avoid unsafe areas or congestion. We call this type of specification \emph{temporal counting constraints} and propose a novel logic called \emph{counting linear temporal logic plus (cLTL+)} to specify them. This logic is two-layered similar to \cite{xu2016census}. The inner logic defines tasks that can be satisfied by a single robot, for instance \emph{surveiling an area} in the previous emergency response scenario. The outer logic requires \emph{sufficiently many} (or \emph{not too many}) robots to satisfy tasks given as inner logic formulas. For example, one might express a task that ``\emph{at least $2$ and not more than $5$ robots} to surveil an area'' using cLTL+.

After introducing the logic, we propose an optimization-based method to generate individual trajectories that collectively satisfy specifications given in cLTL+. The method proposed in this paper uses an integer linear programming (ILP) formulation of temporal specifications with the assumption that robots are perfectly synchronized. We later relax this assumption and show how to generate solutions robust to bounded synchronization errors.

We also discuss several variants of the cLTL+ syntax. Firstly, we introduce a fragment of cLTL+, namely \emph{counting linear temporal logic (cLTL)}. We show that an alternative solution method could scale to systems with hundreds of robots when specifications are given in cLTL and robots have identical dynamics. The logic cLTL and associated synthesis algorithms can be seen as an extension of a special class of counting problems that deal with invariant specifications, first proposed in \cite{nilsson2016control,nilsson2018control}. Secondly, we present an extension to the syntax of cLTL+ to define tasks that could be carried out only by a certain group of robots. For example, one might require a surveillance task to be conducted by robots that are equipped with suitable cameras. This extension allows us to assign tasks to specific group of robots. Finally, we show that continuous state dynamics can be handled directly within our framework.

As another contribution of this paper, we discuss how to relax the synchronous execution assumption and generate trajectories that can be executed asynchronously. Robustness against noise and parameter uncertainty has been extensively studied for single robot systems \cite{zhou1998essentials}, and also extended to consensus problems \cite{6160957}. However, additional factors need to be addressed when dealing with multirobot systems. Unlike single robot systems, multirobot systems might tolerate the failure of individual agents without sacrificing task fulfillment. Such a notion of robustness against failing robots is examined in \cite{cortes2006robust,Kumar2000TFM,sahin2017provably}. Another consideration in multirobot coordination problems is the robustness against synchronization errors. Perfect synchronization of robots might not be practical in real-life applications. The authors of \cite{ulusoy2012robust} characterized a class of LTL formulas that are robust to asynchrony and provided bounds on the deviation from optimality in the presence of asynchrony. However, for general LTL specifications, correctness cannot be guaranteed using this approach. A method that is based on prioritizing robots and planning individual trajectories sequentially was recently proposed in \cite{Desai:2017:DFS}. Trajectories generated with this approach, however, depend highly on how the robots are prioritized---feasible solutions can be missed if priorities are not correctly assigned. In this paper we propose a new definition of robust satisfaction of temporal logic formulas, similar in spirit to \cite{donze2010robust}. We then provide small modifications to our method to generate trajectories that satisfy this notion of robustness, and show that the method is sound and partially complete.

Preliminary versions of this paper appeared in \cite{sahin2017provably} and \cite{sahinsynchronous}. This paper provides a more comprehensive treatment of counting temporal logics and corresponding synthesis problems, including partially complete robust encodings, full proofs and several extensions. Moreover, experimental results implementing the synthesized trajectories in Robotarium \cite{pickem2017robotarium} are provided. The rest of the paper is organized as follows. Background information is provided in Section II. Section III introduces the syntax and semantics for cLTL+ and cLTL. Section IV formally defines the synchronous coordination problem and proposes a solution. An alternative solution, which can solve a special set of problems more efficiently, is also provided in the same section. Section V introduces a time-robustness concept and presents necessary modifications to the method in order to generate robust solutions. Section VI presents two extensions. We demonstrate the efficacy of the methods presented in this paper via numerical and experimental results in Section VII before concluding the paper in Section VIII.

\section{System and behavior descriptions}

This section introduces the notation used in the rest of the paper and provides system and behavior definitions required to formally state the problem we seek to solve.

The set of nonnegative integers is denoted by $\bn$ and the set of positive integers up to $N$ is denoted by $[N] = \{1,2,\dots,N\}$. We use $\mathbf{1}$ to denote the 
vector of all $1$'s. 
We define a set membership indicator function such that given a set A, $\mathds{1}_A(a) = 1$ if $a \in A$ and $\mathds{1}_A(a) = 0$ otherwise. The cardinality of a set $A$ is denoted by $|A|$. We next define transition systems that are used to model the robot dynamics.

\begin{defn} A \textbf{transition system} is a tuple $T = (S, \ra, AP, L)$ where $S$ is a finite set of states, $\ra \subseteq S \times S$ is a transition relation, $AP$ is a finite set of atomic propositions, and $L: S\ra 2^{AP}$ is a labeling function.
\end{defn}

We say that \emph{$s$ satisfies $a$} or \emph{$a$ holds at $s$} if $a\in L(s)$ for $s\in S$ and $a\in AP$. A transition system is said to be \emph{action deterministic} if all transitions are controllable. In this work, we assume that robot dynamics are modeled by action deterministic transition systems. This implies that, if the transition relation includes $(s,s')$, then there exists a controller that can steer a robot from state $s \in S$ to state $s' \in S$. Action deterministic transition systems could capture the behavior of many complex systems and could be obtained using abstraction methods \cite{pola2008approximately,wongpiromsarn2010formal} or motion primitives \cite{gray2012predictive,mellinger2011minimum, paranjape2015motion}. Such abstract graph-based representations are commonly used for describing the behavior of robotic teams \cite{banfi2018multirobot,yu2016optimal}.

\begin{defn}\label{def:trajectory} Given a transition system $T = (S, \ra, AP, L)$, an infinite sequence $\pi: \s{}{0}\s{}{1}\s{}{2}\ldots \in S^\omega$ of states such that $(\s{}{k}, \s{}{k+1})\in \ra$ is called a \textbf{trajectory}. For a given trajectory $\pi$, the corresponding \textbf{trace} is defined as $\sigma(\pi) =  L(\s{}{0})L(\s{}{1})L(\s{}{2}) \ldots \in (2^{AP})^\omega$. 
\end{defn}
The transition system and the trajectories associated with robot {$\mathcal{R}_n$} are denoted by $\bigt{n} = (S_n, \ra_n, AP, L_n)$ and $\traj{n}$, respectively. As indicated by this notation, we allow the dynamics of robots to differ but require that they share the same atomic propositions. Note that this requirement could be achieved without loss of generality, as one can define a global atomic proposition set simply by taking the union of all atomic propositions. For a collection $\{\bigt{n}\}_{n\in[N]}$ of transition systems (or a collection $\{\traj{n}\}_{n\in[N]}$ trajectories), we drop ${n\in[N]}$ and write $\{\bigt{n}\}$ (or $\{\traj{n}\}$) when the range of $n$ is clear from the context.

The collective behavior of a multirobot system depends not only on the individual trajectories but also on how they are interleaved. If robots are not synchronized, there are infinitely many ways a collection of trajectories could be executed. Depending on how the asynchrony plays out, a given property might or might not be satisfied by a given collection of trajectories. Since it is difficult to synchronize a large number of robots perfectly in practice, we allow robots to move asynchronously. To reason about asynchronous executions, we define \emph{local counters}:
\begin{defn}\label{local_counter}
   A mapping $k:\bn\to\bn$ is called a \textbf{local counter} if it satisfies the following:
  \begin{equation}\label{eq:collective}
  \ktn{}{0} =0,\; \ktn{}{t} \leq \ktn{}{t+1} \leq \ktn{}{t}+1, \; \lim_{t\rightarrow\infty} \ktn{}{t} = \infty.
  \end{equation}
  The set of all local counters is denoted by $\mathcal{K}$.
\end{defn}
A local counter is used to keep track of how far a robot has moved along its trajectory. If $\pi_n$ denotes the trajectory and $k_n$ denotes the local counter of robot $\mathcal{R}_n$, the position of $\mathcal{R}_n$ at time $t$ is given by $\traj{n}(\ktn{n}{t})$. Equation \eqref{eq:collective} guarantees that initial conditions are respected, the order of states in a trajectory is preserved, and that robots eventually make progress. 

Given a collection of trajectories, a particular execution is uniquely identified by local counters:
\begin{defn}\label{execution}
 An $N$-dimensional \textbf{collective execution} $K: \bn \to \bn^N$ is a mapping from global time to local counters, i.e., {$K\doteq [k_1 \dots k_N]$} where $k_n \in \mathcal{K}$ for all $n\in [N]$. The set of all $N$-dimensional collective executions is denoted by $\mathcal{K}_N$.
\end{defn}
For a collection $\Pi = \{\traj{1},\dots,\traj{N}\}$ of trajectories and a collective execution $K$, we use $(\Pi,K)$ to denote the unique execution of the trajectories corresponding to $K$. To illustrate the concept of collective execution, we present the following example:

\begin{figure}[t]
  \centering 
  \includegraphics[width=.95\linewidth]{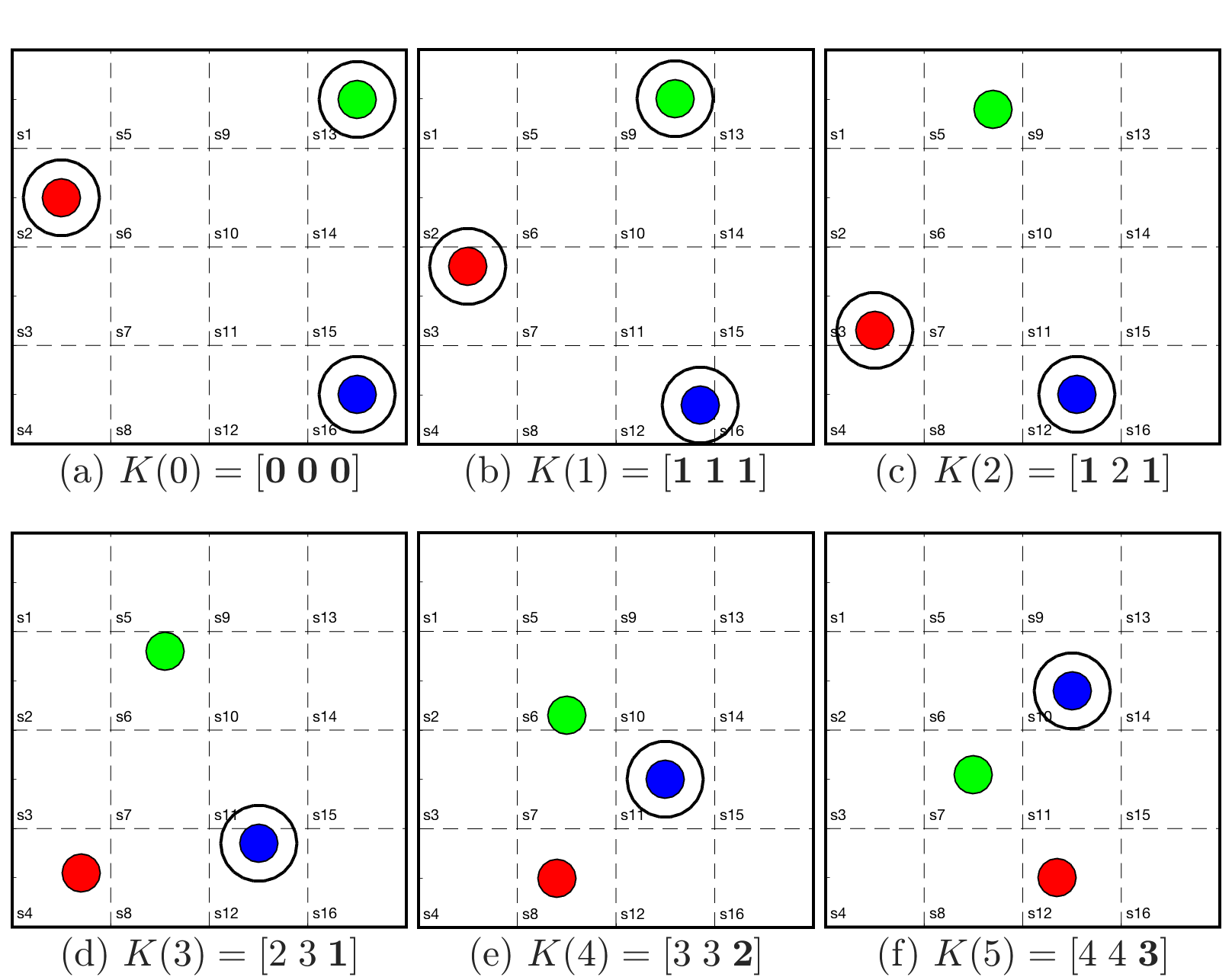}
  \caption{Frames (a) to (e) correspond to snapshots of a possible asynchronous execution taken at times $t=0$ to $t=5$. Robots are enumerated in the order of red, green, blue and local times of robots at each time step are shown below the corresponding frame. Anchoring robots are highlighted with a black circle and the anchor time is shown in bold.} 
  \label{fig:running_example} 
\end{figure}

\begin{example}
\label{ex:running_example}
  Let the following three trajectories
    \begin{align*}
    \begin{matrix}
    \traj{1} &= \\
    \traj{2} &= \\  
    \traj{3} &= 
    \end{matrix} \quad
    \begin{matrix}
    s_2& s_3 & s_4 & s_8 & s_{12} &\dots\\
    s_{13}& s_9& s_{5}&s_6 &s_7&\dots\\ 
    s_{16}& s_{12}& s_{11}& s_{10} & s_9&\dots.
    \end{matrix}
    \end{align*}
  denote the trajectories of a red, green, and a blue robot, respectively. An arbitrary collective execution is illustrated in Figure \ref{fig:running_example}. Local counters are initially set as $K(0) = [0\; 0\; 0]$ at time $t=0$; that is, each robot $\mathcal{R}_n$ is initially positioned at $\s{n}{0}$. Every robot completes a transition by time $t=1$, so local counters are updated as $K(1) = [1\; 1\; 1]$. The red and the blue robots move slower than expected and fail to complete two transitions by time $t=2$. The green robot, on the other hand, successfully completes two transitions by time $t=2$. Thus, local counters are updated as $K(2) = [1\; 2\; 1]$. Similarly, the values of the local counters up to $t=5$ can be seen from Figure~\ref{fig:running_example}.
\end{example}
 
As stated before, when robots are allowed to move asynchronously, there are infinitely many collective executions given a collection of trajectories. Without a bound on asynchrony, it might be impossible to achieve meaningful tasks. For this reason, we introduce the following definition.

\begin{defn}  
A collective execution $K = [k_1\dots k_N]$ is called \textbf{$\boldsymbol{\tau}$-bounded} if 
  $$
  \max_{t\in\bn,n,m\in[N]}(|\ktn{n}{t}-\ktn{m}{t}|)\leq \tau.
  $$ 
  The set of all $\tau$-bounded $N$-dimensional collective executions is denoted by $\mathcal{K}_{N}(\tau)$.
\end{defn}

A collective execution $K \in \mathcal K_N(0)$ is called a \emph{synchronous} execution. In a synchronous execution, all robots start and complete their transitions simultaneously. The synchronous execution $K^* = [k^*_1 \dots k^*_N]$ where $k^*_n(t)=t$ for all $n$ and $t$ is called \emph{globally synchronous}.

\section{Counting logics: syntax and semantics}

This section provides the syntax and semantics of \emph{counting linear temporal logic plus} (cLTL+), as well as the smaller fragment \emph{counting linear temporal logic} (cLTL) which allows for more efficient solutions under certain conditions. 

\subsection{cLTL+}

The logic cLTL+ is a two-layer logic similar to censusSTL \cite{xu2016census}. The \emph{inner logic} is identical to LTL and is used to describe tasks that can be satisfied by a single robot. For example, tasks such as \emph{``avoid collisions with obstacles at all times"} or \emph{``eventually visit region $A$"} can be described by the inner logic. The outer layer then specifies the evolution of the number of robots required to satisfy an inner logic formula. Using the earlier examples, we can specify tasks such as \emph{``All robots} must avoid collisions with obstacles" or \emph{``At least five robots} should eventually visit region $A$" using cLTL+. 

An inner logic formula over a set $AP$ of atomic propositions is defined recursively as follows:
\beq\label{eq:inner}
 \phi ::= True \mid ap  \mid \neg \phi \mid \phi_1\land \phi_2 \mid \Next \phi \mid \phi_1 \until \phi_2, 
\eeq
where $ap \in AP$ is an atomic proposition and $\phi, \phi_1$ and $\phi_2$ are inner logic formulas. 
The symbols $\neg, \land,\Next$ and $\until$ correspond to the logical operators \emph{negation} and \emph{conjunction}, and the temporal operators \emph{next} and \emph{until}, respectively. Other commonly used operators can be derived from these operators, such as \emph{disjunction} $\left(\phi_1 \lor \phi_2 \doteq  \neg (\neg \phi_1 \land \neg \phi_2)\right)$, \emph{release} $\left(\phi_1 \release\phi_2 \doteq \neg\left(\neg\phi_1\until\neg\phi_2\right)\right)$, \emph{eventually} $\left(\eventually\phi\doteq True \until \phi\right)$, \emph{always} $\left(\always\phi\doteq \neg (\eventually\neg \phi)\right)$, etc. We use $\Phi$ to denote the set of all inner logic formulas defined according to \eqref{eq:inner}. Although the inner logic is identical to LTL, we present the semantics here for the sake of completeness.

Let $\sigma \in (2^{AP})^\omega$ be a trace and let $\phi$ be an inner logic formula. Satisfaction of $\phi$ by $\sigma$ at step $t$ is denoted by $\sigma,t \models \phi$ and is defined as follows:
\begin{itemize}
  \item $\sigma,t\models \true$,
  \item for any atomic proposition $a\in AP$, $\sigma,t\models a$ if and only if $a\in \sigma(t)$,
  \item $\sigma,t \models \varphi_1 \land \varphi_2$ if and only if $\sigma,t \models \varphi_1$ and $\sigma,t \models \varphi_2$,
  \item $\sigma,t \models \neg\varphi$ if and only if $\sigma,t \not\models \varphi$,
  \item $\sigma,t \models  \bigcirc\varphi$ if and only if $\sigma,t+1 \models \varphi$, and
  \item $\sigma,t \models \varphi_1 \until \varphi_2$ if and only if there exists $l\geq 0$ such that $\sigma,t+l \models \varphi_2$ and $\sigma,t+l' \models \varphi_1$ for all $0\leq l' <l$.
\end{itemize}

If $\sigma,0 \models \varphi$, then we say that $\sigma$ \emph{satisfies} $\varphi$ and write $\sigma \models \varphi$ for short. We say that a trajectory $\pi$ satisfies $\varphi$ if $\sigma(\pi)\models \varphi$, and write $\pi \models \varphi$.

After defining the inner logic, we now present the syntax for cLTL+ which is based on a new proposition type: a \emph{temporal counting proposition} ($tcp$) is an inner logic formula paired with a nonnegative integer, i.e., $tcp = [\phi, m] \in \Phi \times \bn$. 
The inner logic formula $\phi$ defines a task and $m$ specifies the number of robots needed to satisfy it.
For example, $tcp = [\eventually a, 5]$ is a temporal counting proposition that evaluates to $True$ if the task ``$\eventually a$'' is satisfied by at least five robots. 

The following grammar can now be used to recursively define cLTL+ formulas:
\beq\label{eq:outer}
\mu ::= True \mid tcp  \mid \neg \mu \mid \mu_1\land \mu_2 \mid \Next \mu \mid \mu_1 \until \mu_2,
\eeq
where $tcp \in \Phi \times \bn$ is a temporal counting proposition and $ \mu ,\mu_1$ and $ \mu_2$ are cLTL+ formulas. Identical to inner logic, other commonly used operators can be derived from \eqref{eq:outer}.

Let $\Pi = \{\traj{1},\dots,\traj{N}\}$ be a collection of trajectories and $K = [k_1 \dots k_N]$ be a collective execution. Semantics of the outer logic is similar to the semantics of the inner logic, but they are defined for executions of collections of trajectories. Satisfaction of a cLTL+ formula $\mu$ by the pair $(\Pi,K)$ at time $t$, denoted as $(\Pi,K),t \models \mu$, is defined as follows: 

\begin{itemize}
  \item $(\Pi,K),t\models \true$,
  \item for any temporal counting proposition $tcp = [\phi, m]\in \Phi \times \bn$, we say $(\Pi,K),t\models tcp$ if and only if $|\{n \mid \sigma(\pi_n),k_n(t) \models \phi\}| \geq m$,
  \item $(\Pi,K),t \models \mu_1 \land \mu_2$ if and only if $(\Pi,K),t \models \mu_1$ and $(\Pi,K),t \models \mu_2$,
  \item $(\Pi,K),t \models \neg\mu$ if and only if $(\Pi,K),t \not\models \mu$,
  \item $(\Pi,K),t \models  \bigcirc\mu$ if and only if $(\Pi,K),t+1 \models \mu$, and
  \item $(\Pi,K),t \models \mu_1 \until \mu_2$ if and only if there exists $l\geq 0$ such that $(\Pi,K),t+l \models \mu_2$ and $(\Pi,K),t+l' \models \mu_1$ for all $0\leq l' <l$.
\end{itemize}
If $(\Pi,K),0 \models \mu$, then we say that the pair $(\Pi,K)$ \emph{satisfies} $\mu$ and write $(\Pi,K) \models \mu$ for short.

\subsection{cLTL}

Having defined the cLTL+, we now introduce \emph{counting linear temporal logic} (cLTL), which corresponds to the fragment of cLTL+ where the inner logic is constrained to the grammar $\phi ::= a$. 
Temporal counting propositions in cLTL have the special form $tcp_{cLTL} = [a,m]$ where the inner logic is restricted to atomic propositions instead of an LTL formula, i.e., $a \in AP$. As a result of this restriction, cLTL enforces robots to ``synchronize''.
The following example depicts the differences between cLTL and cLTL+ formulas: 

\begin{example}
  Consider the following cLTL+ formulas: $\mu_1 \doteq \always\eventually [a, m]$, $\mu_2 \doteq [\always\eventually a, m]$, and $\mu_3 \doteq \always[\eventually a, m]$ for $a \in AP$.

  Here the inner formula of $\mu_1$, ``$a$'', is an atomic proposition. Hence, $\mu_1$ is also a cLTL formula where the task ``$a$'' can be satisfied by any robot, simply by visiting a state where $a$ holds. The temporal counting proposition ``$[a,m]$'' is satisfied at time $t$ if at least $m$ robots to satisfy $a$ at time $t$. Moreover, the temporal operators ``$\always\eventually$'' in the outer layer necessitate that the temporal counting proposition is satisfied infinitely many times. Thus, there should be an infinite number of instances where $a$ is \emph{simultaneously} satisfied by more than $m$ robots in order for $\mu_1$ to be satisfied.
    
  On the other hand, neither $\mu_2$ nor $\mu_3$ can be specified in cLTL. In both formulas, the inner formula contains temporal operators which are not allowed in the cLTL syntax. The difference between $\mu_1$ and $\mu_2$ is that the latter relaxes the simultaneity requirement. The inner formula $\always\eventually a$ can be satisfied by any robot if the robot satisfies $a$ infinitely many times. The integer $m$ is the smallest number of robots that needs to satisfy the inner formula. Hence, the cLTL+ formula $\mu_2$ requires at least $m$ robots to satisfy $a$ infinitely many times, but as opposed to $\mu_1$ they need not do so simultaneously. For any given time the number of robots that satisfy $a$ might never exceed $m$, or even $1$. Note that any collective trajectory that satisfies $\mu_1$ also satisfies $\mu_2$, but the converse is not true. 
    
  The difference between $\mu_2$ and $\mu_3$ is more subtle. Any collective trajectory that satisfies $\mu_2$ would also satisfy $\mu_3$. The converse is also true if the number of robots is finite. However, in the hypothetical scenario where there are infinitely many robots, $\mu_3$ can be satisfied even if no robot satisfies $a$ more than once. 
  $\blacksquare$ 
\end{example}

\section{Synchronous coordination problem and its solution}\label{chp:sol}

This section provides the formal definition of the synchronous multirobot coordination problem and provides an optimization-based solution for cLTL+ specifications. Subsequently, an alternative solution is proposed for the special case where the specifications are given in cLTL and the robots have identical dynamics. The alternative solution is shown to scale much better with the number of robots. In fact, the number of robots has almost no effect on the solution time and problems with hundreds of robots can be solved with the alternative method as demonstrated in Section \ref{chp:results}.

\begin{Problem}\label{prob1}
  Given $N$ robots with dynamics $\{\bigt{n} = (S_n,\ra_n, AP, L_n)\}$, initial conditions $\{\s{n}{0}\}$, and a cLTL+ formula $\mu$ over $AP$, synthesize a collection $\Pi = \{\traj{1},\dots, \traj{N}\}$ such that the globally synchronous execution of $\Pi$ satisfies $\mu$, i.e., $(\Pi,K^*) \models\mu$.
\end{Problem}

In order to solve Problem \ref{prob1}, we generate individual trajectories in a centralized fashion. Robots then follow these trajectories in a distributed fashion, using local controllers without runtime communication. To generate trajectories we encode the robot dynamics and the cLTL+ constraints using integer linear constraints and pose the synthesis problem as an integer linear program (ILP). This approach is inspired by the bounded model-checking literature \cite{biere2linear}. In particular, we focus the search on individual trajectories on prefix-suffix form. That is, for a given integer $h$, we aim to construct individual trajectories of the form $\traj{n}= \s{n}{0}\s{n}{1}\ldots \s{n}{h}\ldots$ and find an integer $l\in\{0,\ldots, h-1\}$ such that  for all $k \geq h$, $\s{n}{k} = \s{n}{k+l-h}$. In the following, we present ILP encodings of dynamic and temporal constraints.

\subsection{Globally synchronous robot dynamics}

Given the transition system $\bigt{n} = (S_n, \ra_n, AP, L_n)$ that represents the dynamics of robot $\mathcal{R}_n$, consider the adjacency matrix $A_n$ corresponding to the transition relation $\ra_n$. We use a Boolean vector $\w{n}{t} \in \{0,1\}^{|S_n|}$ with a single nonzero component to denote the state of robot $\mathcal{R}_n$ at time $t$. For example, assume $S_n = \{v^1, v^2, v^3\}$ and that robot $\mathcal{R}_n$ is at $v^2$ at time $t$. Then,  $\w{n}{t} = \begin{bmatrix}
 0 & 1& 0
 \end{bmatrix}^T$. With a slight abuse of notation, we equivalently write $\w{n}{t} = v^2$.

Given adjacency matrices $\{ A_n \}$ corresponding to $\{ \bigt{n} \}$ and a set of inital conditions $\{\s{n}{0}\}$, the dynamics of robot $\mathcal{R}_n$ are captured as follows:
\begin{equation}\label{eq:dynamics}
\begin{split}
\w{n}{t+1} &\leq A_n\w{n}{t}, \\
\w{n}{0} = \s{n}{0},&\quad \mathbf{1}^T \w{n}{t} = 1,
\end{split}
\end{equation}
for all  $n \in [N]$ and for all $t\in\{0,\ldots, h-1\}$. The trajectory $\traj{n}$ corresponding to the sequence $\mathbf{w}_n = \w{n}{0} \w{n}{1} \dots$ can then be extracted by locating the nonzero component in each $\w{n}{t}$.

\subsection{Loop constraints}

To ensure that the generated trajectories are in prefix-suffix form, we introduce $h$ binary variables $\mathbf{z_{loop}} = \{\z{loop}{}{0}, \dots \z{loop}{}{h-1}\}$ and the following constraints:
\begin{subequations}\label{eq:loop}
\begin{align}
\w{n}{h} &\leq  \w{n}{t} + \mathbf{1}(1-\z{loop}{}{t}), \\
\w{n}{h} &\geq  \w{n}{t} - \mathbf{1}(1-\z{loop}{}{t}),\\
\sum_{t=0}^{h-1} \z{loop}{}{t} &=  1
\end{align}
\end{subequations} 
 for all  $n \in [N]$ and for all $t \in \{0,\dots, h-1\}$. These constraints guarantee that there exists a unique $t$ such that $\z{loop}{}{t}=1$ and $\w{n}{h} =  \w{n}{t}$. For all other time instances, the first two inequalities are trivially satisfied.
 
\subsection{Inner logic constraints} 

We next recursively describe how counting temporal logic constraints can be translated into integer constraints. Let $\phi \in \Phi$ be an inner logic formula given according to \eqref{eq:inner} and $h$ be the horizon length. For each robot $n$, we introduce $h$ binary decision variables $\z{n}{\phi}{t} \in \{0,1\}$ for $t \in \{0,1,\dots,h-1\}$ and ILP constraints such that $\z{n}{\phi}{t} = 1$ if and only if $\pi_n,t \models \phi$. Hence, satisfaction of an inner formula $\phi$ by the robot $\mathcal{R}_n$ is equivalent to $\z{n}{\phi}{0} = 1$. We use the following encodings to recursively create the corresponding ILP constraints:

\emph{ap (atomic proposition):} Let $\phi = a \in AP$ be an atomic proposition and let the states of $\bigt{n}$ be given by the set $S_n = \{v_n^1,v_n^2,\dots,v_n^{|S_n|}\}$. We define the vector $\mathbf{v}_n^{\phi} \in \{0,1\}^{|S_n|}$ such that the $i^{th}$ entry of $\mathbf{v}_n^{\phi}$ is $1$ if and only if $a \in L(v_n^i)$. That is, $\mathbf{v}_n^\phi$ encodes the labeling function $L_n$. Then we introduce the following constraints for all $n \in [N]$:
\begin{equation}
\label{eq:ap}
\begin{split}
(\mathbf{v}_n^{\phi})^T \w{n}{t} &\geq \z{n}{\phi}{t},\\
(\mathbf{v}_n^{\phi})^T \w{n}{t} & < \z{n}{\phi}{t} +1.
\end{split}
\end{equation}
 
\emph{$\neg$ (negation):} Let $\phi = \neg \varphi$. Then for all $n \in [N]$,
\begin{equation}\label{eq:neg}
\z{n}{\phi}{t} = 1 - \z{n}{\varphi}{t},\qquad t=0,\dots,h-1.
\end{equation}

\emph{$\land$ (conjunction):} Let $\phi = \bigwedge_{i=1}^I \varphi_i$. Then for all $t=0,\dots,h-1$ and for all $n \in [N]$, 
\begin{equation}
\label{eq:conjunction}
\begin{split}
\z{n}{\phi}{t} & \leq \z{n}{\varphi_i}{t}, \qquad \text{for }  i=1,\dots,I \quad \text{and,}\\
\z{n}{\phi}{t} & \geq 1-I+\sum_{i=1}^I \z{n}{\varphi_i}{t}.
\end{split}
\end{equation}

\emph{$\lor$ (disjunction):} Let $\phi = \bigvee_{i=1}^I \varphi_i$. Then for all $t=0,\dots,h-1$ and for all $n \in [N]$,
\begin{equation}\label{eq:lor}
\begin{split}
\z{n}{\phi}{t} & \geq \z{n}{\varphi_i}{t}, \qquad \text{for }  i=1,\dots,I \quad \text{and,}\\
\z{n}{\phi}{t} & \leq \sum_{i=1}^I \z{n}{\varphi_i}{t}.
\end{split}
\end{equation}

With a slight abuse of notation, we also use Boolean operators on these optimization variables. For example, for $\phi = \bigvee_{i=1}^I \varphi_i$, we write $\z{n}{\phi}{t} = \bigvee_{i=1}^I \z{n}{\varphi_i}{t}$ instead of stating the inequalities in \eqref{eq:lor}. Encoding of the temporal operators is then as follows:

\emph{$ \bigcirc$ (next):} Let $\phi =  \bigcirc \varphi$, then for all $n \in [N]$
\begin{equation}
\begin{split}
\z{n}{\phi}{t} &= \z{n}{\varphi}{t+1}
, \qquad t=0,\dots,h-2 \text{ and, }\\
\z{n}{\phi}{h-1} &= \bigvee_{t=0}^{h-1} (\z{n}{\varphi}{t} \land \z{loop}{}{t}).
\end{split}
\end{equation}

\emph{$\until$(until):} if $\phi = \varphi_1 \until \varphi_2$, then for all $n \in [N]$
\begin{equation}\label{eq:until}
\begin{split}
\z{n}{\phi}{t} &= \z{n}{\varphi_2}{t} \lor \left( \z{n}{\varphi_1}{t} \land \z{n}{\phi}{t+1}  \right), \quad t\leq h-2,\\
\z{n}{\phi}{h-1} &= \z{n}{\varphi_2}{h-1}\; \lor\\ 
&\quad\left( \z{n}{\varphi_1}{h-1} \land \left( \bigvee_{t=0}^{h-1} \left(\z{loop}{}{t} \land \tildez{n}{\phi}{t}\right) \right)\right), \\
\tildez{n}{\phi}{t} &= \z{n}{\varphi_2}{t} \lor \left( z^{\varphi_1,n}_t \land \tildez{n}{\phi}{t+1} \right), \quad  t\leq h-2,\\
\tildez{n}{\phi}{h-1} &= \z{n}{\varphi_2}{h-1},
\end{split}
\end{equation}
where $\tildez{n}{\phi}{t}$ are auxiliary binary variables. As shown in \cite{biere2linear}, not introducing auxiliary variables results in trivial satisfaction of the \emph{until} operator.
 
\subsection{Outer logic constraints} 

Similar to the inner logic, we proceed by transforming a cLTL+ formula into ILP constraints. Given a cLTL+ formula $\mu$ and a time horizon $h$, we create $h$ binary decision variables $\mathbf{y^{cLTL+}} = \{\y{\mu}{t}\}$, where $t \in \{0,1,\dots ,h-1\}$ and ILP constraints $ILP(\mu)$. While doing so, we ensure that $\y{\mu}{t} = 1$ if and only if $(\Pi,K^*),t \models \mu$ where $K^*$ is the globally synchronous collective execution. We remind the reader that since ILP constraints are created recursively, creating the constraints for formula $\mu$ will create the constraints for all the inner logic formulas appearing in $\mu$. We denote by $ILP(\mu)$ the set of all resulting constraints that encode the satisfaction of $\mu$, and by $\mathbf{(z, y)^{cLTL+}}$, the set of all variables created in this process.
 
We provide encodings only for counting propositions since the rest of the semantics are identical. Let $\mu = [\phi, m] \in AP \times \bn$ be a temporal counting proposition. Then

\begin{equation}\label{eq:cp}
m > \sum_{n=1}^N \z{n}{\phi}{t} - M \y{\mu}{t} \geq m-M,
\end{equation}
where $M$ is a sufficiently large positive number, in particular, $M \geq N + 1$. Note that when $\y{\mu}{t} = 1$, the inequality on the right reduces to $ \sum_{n=1}^N \z{n}{\phi}{t}\geq m$. Moreover, the inequality on the left is trivially satisfied since $M \geq N + 1$. Conversely, when $\y{\mu}{t} = 0$, the inequality on the right is trivially satisfied and the inequality on the left reduces to $\sum_{n=1}^N \z{n}{\phi}{t} < m$. Therefore, $\y{\mu}{t} =1 $ 
if and only if the number of robots that satisfy $\phi$ at time $t$ is greater than or equal to 
$m$. Conversely, ($\y{\mu}{t} =0 $) if and only if the number of robots that satisfy $\phi$ at time $t$ is less than
$m$. Therefore, the ILP constraints in \eqref{eq:cp} are correct and consistent with the semantics of cLTL+.

\subsection{Overall optimization problem and its analysis}

The following optimization problem is formed to generate a solution to an instance of Problem \ref{prob1} given a horizon length $h$:
\begin{equation}
\label{eq:main}
  \begin{split}
  \text{Find }\quad &\{\mathbf{w}_{n}\}, \mathbf{z^{loop}}, \mathbf{(z, y)^{cLTL+}}\\
  \text{s.t.} \quad&\eqref{eq:dynamics}, \eqref{eq:loop},ILP(\mu)\text{ and } \y{\mu}{0} = 1.
  \end{split}
\end{equation}

Next we analyze this solution approach. The following theorem shows that the solutions generated by \eqref{eq:main} are sound.

\begin{prop}\label{prop:sound}
  If the optimization problem in \eqref{eq:main} is feasible for a cLTL+ formula $\mu$, then a collection $\Pi=\{\traj{n}\}_{n\in[N]}$ of trajectories can be extracted from $\{\mathbf{w}_{n}\}$ such that $(\Pi,K^*) \models \mu$.
\end{prop}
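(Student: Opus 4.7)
The plan is to prove soundness by a structural induction argument that traverses the recursive encoding and matches each ILP constraint to the corresponding semantic clause. First I would extract the trajectories: from any feasible $\{\mathbf{w}_n\}$, the constraints \eqref{eq:dynamics} guarantee that $\w{n}{t+1} \leq A_n \w{n}{t}$ with $\mathbf{1}^T \w{n}{t}=1$, so each $\w{n}{t}$ is a unit basis vector and the induced sequence $\s{n}{0}\s{n}{1}\dots\s{n}{h}$ is a valid prefix in $\bigt{n}$. The loop constraints \eqref{eq:loop} force a unique $t^\star$ with $\z{loop}{}{t^\star}=1$ and $\w{n}{h}=\w{n}{t^\star}$, so the prefix can be extended to an infinite trajectory $\traj{n}$ of prefix-suffix form by repeating the block $\s{n}{t^\star}\dots\s{n}{h-1}$ indefinitely. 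This yields a well-defined collection $\Pi=\{\traj{n}\}$, and it remains to show $(\Pi,K^*)\models\mu$.

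Next I would prove the key intermediate claim by structural induction on the inner logic formulas $\phi\in\Phi$: for every $n$ and every $t\in\{0,\dots,h-1\}$,
\[
\z{n}{\phi}{t}=1 \iff \sigma(\traj{n}),t\models\phi.
\]
The base case $\phi=a\in AP$ follows from \eqref{eq:ap}, since $(\mathbf{v}_n^\phi)^T\w{n}{t}$ is $1$ exactly when $a\in L_n(\s{n}{t})$, so the pair of inequalities pins $\z{n}{\phi}{t}$ to that indicator. The propositional cases \eqref{eq:neg}--\eqref{eq:lor} are direct Boolean translations. For $\Next$, the equation $\z{n}{\phi}{t}=\z{n}{\varphi}{t+1}$ handles $t\leq h-2$; at $t=h-1$, the loop disjunction picks exactly the $\z{n}{\varphi}{t^\star}$ corresponding to the active loop index, which by prefix-suffix construction equals the value of $\varphi$ one step past the end of the prefix.

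The main obstacle is the \emph{until} case \eqref{eq:until}, because satisfaction of $\varphi_1\until\varphi_2$ on an infinite ultimately-periodic trace can depend on witnesses that lie beyond the prefix. The auxiliary variables $\tildez{n}{\phi}{t}$ are designed exactly for this: running the induction simultaneously on $\tildez{n}{\phi}{t}$, one shows that $\tildez{n}{\phi}{t}=1$ iff there exists $l\in\{t,\dots,h-1\}$ with $\sigma(\traj{n}),l\models\varphi_2$ and $\sigma(\traj{n}),l'\models\varphi_1$ for $t\le l'<l$, i.e., the until is witnessed within the remainder of the prefix. The boundary clause then allows the until at $t=h-1$ to be discharged either directly by $\varphi_2$, or by $\varphi_1$ holding at $h-1$ together with a witness from the loop body $\tildez{n}{\phi}{t^\star}$; because the suffix repeats the block starting at $t^\star$, this is equivalent to the existence of an infinite-horizon witness on the periodic tail, which matches the semantics of $\until$. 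A routine reverse induction from $t=h-1$ down propagates the equivalence to all earlier $t$.

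Finally I would lift the argument to the outer logic by an analogous induction on cLTL+ formulas $\mu$, establishing $\y{\mu}{t}=1 \iff (\Pi,K^*),t\models\mu$. Since $K^*$ is globally synchronous, $k_n^*(t)=t$, so the temporal counting proposition clause follows from \eqref{eq:cp}: with $M\geq N+1$, the two inequalities force $\y{\mu}{t}=1$ precisely when $\sum_{n=1}^N\z{n}{\phi}{t}\geq m$, which by the inner-logic equivalence is exactly $|\{n : \sigma(\traj{n}),t\models\phi\}|\geq m$. The Boolean, $\Next$, and $\until$ cases of the outer logic are encoded identically to the inner layer and handled by the same arguments; in particular the loop constraints and the auxiliary outer-layer variables ensure the periodic extension is respected. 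The feasibility requirement $\y{\mu}{0}=1$ then gives $(\Pi,K^*),0\models\mu$, i.e., $(\Pi,K^*)\models\mu$, completing the proof.
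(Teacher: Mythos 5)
Your proposal is correct and follows essentially the same route as the paper: extract prefix-suffix trajectories from \eqref{eq:dynamics} and \eqref{eq:loop}, establish the correspondence between the decision variables and the semantics layer by layer, and conclude from $\y{\mu}{0}=1$. The only difference is that the paper delegates the inner-logic equivalence (including the \emph{until} auxiliary-variable argument) to the bounded model checking literature \cite{biere2linear}, whereas you carry out the structural induction explicitly; your version is more self-contained but proves the same key lemma.
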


\begin{proof}
  Constraint \eqref{eq:dynamics} guarantees that the collection $\Pi$ of trajectories generated from $\{\mathbf{w}_{n}\}$ are feasible, consistent with the initial conditions and with the system dynamics. Furthermore, \eqref{eq:loop} ensures that these solutions can be extended to infinite trajectories of the form $\traj{n} = \s{n}{0}\dots \s{n}{l-1}\left(\s{n}{l}\dots \s{n}{h-1}\right)^\omega$. The ILP encodings \eqref{eq:ap}-\eqref{eq:until} of LTL formulas are sound \cite{biere2linear}, and the same encodings are also used for cLTL+ formulas by replacing $\z{n}{\phi}{t}$ with $\y{\mu}{t}$, where $\mu$ is any cLTL+ formula. The only exception is that \eqref{eq:ap} is replaced with \eqref{eq:cp}, which we showed to be correct. Therefore, the constraint $\y{\mu}{0} = 1$ together with $ILP(\mu)$ guarantees that $(\Pi,K^*) \models \mu$. Thus, if \eqref{eq:main} is feasible, then the globally synchronous execution of $\Pi$ solves Problem \ref{prob1}.
\end{proof}

As a corollary, it is easy to show that stutter invariance of formulas (see Theorem 7.92 from \cite{baier}) allows the generalization of the soundness result from globally synchronous executions to all synchronous executions:

\begin{corr}
If $\mu$ does not contain any next operator $\Next$, neither in the inner nor in the outer logic, then $(\Pi, K) \models \mu$ for all synchronous executions $K \in \mathcal{K}_N(0)$.
\end{corr}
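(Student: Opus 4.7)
The plan is to invoke the stutter invariance theorem for LTL formulas without the next operator (Theorem 7.92 of Baier) twice, once at the inner layer and once at the outer layer, connecting the arbitrary synchronous execution to the globally synchronous one via stuttering.

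First I would observe that any $K \in \mathcal{K}_N(0)$ collapses to a single common counter: the defining inequality $|k_n(t) - k_m(t)| \leq 0$ forces $k_1(t) = \ldots = k_N(t) =: k(t)$, where by Definition \ref{local_counter}, $k$ satisfies $k(0) = 0$, $k(t) \leq k(t+1) \leq k(t) + 1$, and $k(t) \to \infty$. Compared with the globally synchronous counter $k^*(t) = t$, the sequence $k$ is obtained by inserting stutter steps: at each $t$ either $k(t+1) = k(t) + 1$ (advance) or $k(t+1) = k(t)$ (stutter). Consequently, for every robot $\mathcal{R}_n$, the trace $\sigma(\pi_n(k(0)))\sigma(\pi_n(k(1)))\ldots$ induced by $K$ is a stutter-expansion of the trace $\sigma(\pi_n) = L_n(\pi_n(0)) L_n(\pi_n(1)) \ldots$ induced by $K^*$, and the limit condition $k(t) \to \infty$ ensures the expansion is not eventually trivial.

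Next I would push stutter invariance through both layers. Since the inner logic coincides with LTL and $\mu$ contains no next operator anywhere, each inner subformula $\phi$ appearing in $\mu$ is stutter-invariant. Hence for each $n$ and each $t$, $\sigma(\pi_n), k(t) \models \phi$ if and only if $\sigma(\pi_n), t' \models \phi$ for the unique $t'$ with $k^*(t') = k(t)$, i.e., $t' = k(t)$. This means the truth value of every temporal counting proposition $[\phi,m]$ at collective time $t$ under $K$ equals its truth value at time $k(t)$ under $K^*$. Therefore the Boolean sequence of tcp-valuations produced by $(\Pi,K)$ is itself a stutter-expansion of the one produced by $(\Pi, K^*)$. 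Applying stutter invariance a second time to the outer formula $\mu$ (which also contains no next) yields $(\Pi, K),0 \models \mu \iff (\Pi,K^*),0 \models \mu$, and the right-hand side holds by Theorem \ref{prop:sound}.

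The only subtle step is handling the outer layer cleanly, since stutter invariance is classically stated for LTL over fixed atomic propositions; the mild work is to view each temporal counting proposition as an atomic proposition of the outer layer whose time-series of truth values is a stutter-expansion, so that the standard theorem applies verbatim. Everything else is bookkeeping once the common counter $k$ is identified and recognized as a stuttering of $k^*$.
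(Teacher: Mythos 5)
Your proof is correct and follows essentially the same route the paper takes: the corollary is justified there only by a one-line appeal to stutter invariance (Theorem 7.92 of Baier--Katoen), and you have filled in precisely that argument by collapsing $K$ to a common counter $k$, recognizing it as a stuttering of $k^*$, and pushing stutter invariance through to the outer formula. One minor observation: because the semantics evaluates each inner formula on the original trace $\sigma(\pi_n)$ at index $k_n(t)$ (not on a stutter-expanded trace), your inner-layer application of stutter invariance is vacuous --- both sides of that equivalence are literally the same statement --- so only the outer-layer application does real work here.
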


The following theorem shows that encodings presented in \eqref{eq:dynamics}-\eqref{eq:main} are complete:
\begin{prop}\label{prop:sync_comp}
  If there is a solution to Problem \ref{prob1}, then there exists a finite $h$ such that \eqref{eq:main} is feasible.
\end{prop}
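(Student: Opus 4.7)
My plan is to adapt the standard bounded model checking completeness argument to cLTL+. The intuition is that any infinite collection of trajectories satisfying $\mu$ can be replaced by a ``pumped-down'' collection of lasso-shaped trajectories with a common stem and a common loop, the total length of which gives the horizon $h$ that makes \eqref{eq:main} feasible.

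First, I translate each inner LTL subformula $\phi$ of $\mu$ into a nondeterministic Büchi automaton $B_\phi$ over $2^{AP}$. For each robot I form the synchronous product $T_n \times \prod_\phi B_\phi$; in this augmented structure, the truth of $\phi$ at $\traj{n},t$ is captured by an ``accepting-from-time-$t$'' condition on the $B_\phi$ component of the run. I then take the global synchronous product over all $N$ augmented per-robot automata together with a Büchi automaton for the outer LTL formula $\mu$, read over the alphabet whose letters encode which temporal counting propositions currently hold; the latter is determined by how many augmented components satisfy the ``accepts-from-here'' condition for each inner $\phi$.

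The resulting finite $\omega$-automaton is nonempty by assumption (the given solution lifts to an accepting run). Standard ultimate-periodicity of nonempty $\omega$-regular languages then yields an accepting lasso run of stem length $l$ and loop length $h-l$, with $h$ bounded by the number of states in the product. Projecting this lasso onto each robot produces trajectories $\bar\pi_n$ in prefix-suffix form that share a common loop index $l$, a consequence of the synchronous nature of the product.

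Finally, I instantiate \eqref{eq:main} at horizon $h$ by setting $\w{n}{t}$ from $\bar\pi_n$, setting $\z{loop}{}{l}=1$ with $\z{loop}{}{t}=0$ elsewhere, and assigning the auxiliary variables $\z{n}{\phi}{t}$ and $\y{\mu}{t}$ to the truth values of the corresponding subformulas along the periodic unrolling. The BMC encodings \eqref{eq:ap}--\eqref{eq:until} are known to be sound on lasso traces \cite{biere2linear}, and the correctness of \eqref{eq:cp} was established in the proof of Proposition~1, so all constraints hold and $\y{\mu}{0}=1$. The main obstacle is the future-dependence of counting propositions: the truth of $[\phi,m]$ at time $t$ depends on infinite suffixes of each $\traj{n}$. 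The Büchi-product construction absorbs this future-dependence into finite state, after which the pumping argument and the soundness of the BMC encoding finish the proof.
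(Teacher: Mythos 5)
Your overall strategy---reduce to emptiness of a finite $\omega$-automaton, extract a lasso by ultimate periodicity, and instantiate the ILP from the lasso---is a legitimate and genuinely different route from the paper's. The paper instead eliminates the counting propositions syntactically: it introduces robot-indexed copies $a_n$ of each atomic proposition, rewrites each $[\phi,m]$ as the LTL formula $\bigvee_{|J|=m}\bigwedge_{j\in J}\phi_j$ over the product transition system $\prod_n T_n$, and then cites completeness of prefix-suffix solutions for standard LTL synthesis. Your version keeps the construction automata-theoretic and makes explicit the final step of populating the decision variables of \eqref{eq:main}, which the paper leaves implicit; the paper's version avoids automata for the inner logic entirely at the cost of a combinatorially long formula.

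There is, however, a genuine gap in your central device. You capture the truth of an inner formula $\phi$ at position $t$ of $\traj{n}$ by an ``accepting-from-time-$t$'' condition on the $B_\phi$ component of a single run, with $B_\phi$ a \emph{nondeterministic} B\"uchi automaton. A single run of a nondeterministic automaton only \emph{witnesses} satisfaction; its failure to be accepting from $t$ does not certify $\sigma(\traj{n}),t\not\models\phi$, since a different run may accept. But the semantics of $[\phi,m]$ is a threshold on the \emph{exact} set $\{n \mid \sigma(\traj{n}),k_n(t)\models\phi\}$, the outer grammar \eqref{eq:outer} permits $\neg[\phi,m]$, and the constraints \eqref{eq:ap}--\eqref{eq:cp} are biconditionals, so your product must determine the truth value of every inner $\phi$ at every position exactly; otherwise the product can have accepting lassos whose guessed (under-approximated) truth assignments neither correspond to solutions of Problem~\ref{prob1} nor satisfy the ILP. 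The repair is to use deterministic (Rabin or parity) automata for the inner formulas, or to run $B_\phi$ and $B_{\neg\phi}$ jointly with a consistency requirement, or to adopt the paper's syntactic rewriting, which sidesteps the issue. With that repair the remainder of your argument (pumping, the common loop index from synchrony, and completeness of the BMC equations on $(h,l)$-loops) goes through.
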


\begin{proof}
  In order to show that prefix-suffix form solutions are complete, we reduce Problem \ref{prob1} to a regular LTL control synthesis problem, for which prefix-suffix solutions have been shown to be complete \cite{baier}.

  Let $\Phi$ be the set of all inner logic formulas defined according to \eqref{eq:inner} over $AP$. Given any cLTL+ formula $\mu$, one can define an equivalent LTL formula over a new set of atomic propositions $AP' =\bigcup_{a \in AP} \{a_1,a_2,\dots a_N\}$. For each temporal counting proposition $tcp = [\phi,m]$ in $\mu$, we define a new set $\{\phi_1,\phi_2,\dots \phi_N \}$ of LTL formulas over $AP'$, where $\phi_n$ is obtained by replacing every atomic proposition $a \in AP$ with the corresponding $a_n \in AP'$. We then define $tcp' \doteq \bigvee_{i=1}^{I} (\bigwedge_{j \in J_i} \phi_{j})$, where $J = \{J_1,\dots,J_I\}$ is the set of all $m$-element subsets of $[N]$, hence $I = {N \choose m}$. Note that, $tcp'$ is equivalent to $tcp$, meaning that any collective execution that satisfy one will also satisfy the other. Even though this method increases the number of atomic propositions linearly and the length of the formula combinatorially with the number of robots, it will transform a cLTL+ formula into a regular LTL formula over a finite set of atomic propositions. 

  Next we create a product transition system $T' \doteq \Pi_n \bigt{n}$ with the set $AP'$ as its atomic propositions. Now Problem \ref{prob1} is reduced to a standard LTL synthesis problem and it can be solved using a model-checker to generate a prefix-suffix solution or to declare the non-existence of solutions (see e.g., \cite{belta2017formal}).
\end{proof} 

\begin{rem} The proof of Theorem \ref{prop:sync_comp} highlights the advantages of using cLTL+ in scenarios where robot identity is not critical for accomplishing the collective task. Although the problem can be reduced to a standard LTL synthesis problem as the proof suggests, the reduction results in a synthesis problem on a product transition system with size exponential in the number of robots, and with an LTL formula that is combinatorially longer than the cLTL+ formula. Indeed, without a convenient logic, just writing down that LTL formula would be a tedious and error-prone task.
\end{rem}

A few remarks on the complexity are in order. An instance of \eqref{eq:main} has $\mathcal{O}(hN(|S_n| + |\mu|))$ decision variables and constraints where $h$ is the solution horizon, $N$ is the number of robots, $|S_n|$ is the number of states of the largest transition system and $|\mu|$ is the length of the cLTL+ formula $\mu$. Enforcing collision avoidance introduces $\mathcal{O}(hN^2|S_n|)$ additional constraints. 

\subsection{cLTL encodings}\label{chp:solcLTL}
 Given an instance of Problem \ref{prob1}, if the specification $\mu$ can be expressed in cLTL and all robots have identical dynamics, more efficient encodings could be defined. In the following, we first define the problem where cLTL encodings could be used and then provide the corresponding encodings:
 
 \begin{Problem}\label{prob:cLTL}
  Given $N$ robots with identical dynamics $T = (S,\ra, AP, L)$, initial conditions $\{\s{n}{0}\}$, and a cLTL formula $\mu$ over $AP$, synthesize a collection $\Pi = \{\traj{1},\dots,\traj{N} \}$ of trajectories such that the globally synchronous collective execution of $\Pi$ satisfies $\mu$, i.e., $(\Pi,K^*) \models\mu$.
 \end{Problem}
 
Let the set $S$ of states be enumerated such that $S  = \{v^1,v^2,\dots,v^{|S|}\}$. Instead of individually encoding the dynamics of each robot, we define an \emph{aggregate state} vector $\mathbf{w} = [w^1, w^2,\dots w^{|S|}]^T$ where the $i^{th}$ row of $\mathbf{w}$ denotes the number of robots at state $v^i$. Similarly, the \emph{aggregate input} is defined as a vector $\mathbf{u} = [u_1^1, u_1^2, \dots, u_{1}^{|S|}, u_{2}^{1},\dots u_{2}^{|S|}, \dots u_{|S|}^{|S|}]^T$ where $u_i^j$ denotes the number of robots that transition from state $v^i$ to $v^j$. Note that the aggregate input is state-dependent since the total number of robots sent from a particular state to others cannot be greater than the number of robots in that state. Furthermore, the number of robots sent from a state can only be a non-negative integer. An input satisfying these conditions is called \emph{admissible} and $\Upsilon(\mathbf{w})$ denotes the set of all admissible inputs for a given state $\mathbf{w}$. The set $\Upsilon(\mathbf{w})$ can be captured by the following set of equalities:

\begin{equation}
\label{eq:total_input_constr}
\Upsilon(\mathbf{w}) = \left\{ \{ u^j_i\} : \begin{aligned}
& \sum_{j=1}^{|S|} u^j_i = w^i\\
& u^j_i = 0 \text{ if } (v^i, v^j) \not\in \rightarrow \\
& u_i^j \in \bn 
\end{aligned}  \right\}.
\end{equation}

The evolution of aggregate state can be captured by the following linear equalities:  
  \begin{equation}\label{eq:cLTL_dyn}
  \begin{split}
  \mathbf{w}(t+1) &= B\mathbf{u}(t),\\
  \mathbf{u}(t) &\in \Upsilon(\mathbf{w}(t)),
  \end{split}
  \end{equation}
  where $B$ is defined as $B \doteq I_{|S|}\otimes \mathbf{1}_{|S|}^T$ where $I_{|S|}$ is the identity matrix of size $|S|$ and $\otimes$ is the Kronecker product. 
  
Loop constraints for aggregate states can be written as:

\begin{equation}
  \begin{split}\label{eq:loop2}
  \mathbf{w}(h) &\leq  \:\mathbf{w}(t) + \mathbf{1}(1-z^{loop}_{t}), \\
  \mathbf{w}(h) &\geq \: \mathbf{w}(t) - \mathbf{1}(1-z^{loop}_{t}).
  \end{split}
\end{equation} 

Inner logic constraints are no longer needed since the cLTL inner logic is constrained to the grammar $\phi::=a$ where $a\in AP$.
  In the outer logic, only the encoding of temporal counting propositions in \eqref{eq:cp} needs modification.
  Let $\mu = [a,m]$ be a $tcp_{cLTL}$ and $S = \{ v^1, \dots, v^{|S|}\}$ be the set of states. We define the vector $\mathbf{v}^a \in \{0,1\}^{|S|}$ similar to \eqref{eq:ap}, that is, the $i^{th}$ entry of $\mathbf{v}^a $ is $1$ if and only if $a\in L(v^i)$. Then, for all $t = 0,\dots,h$, the constraints
  \begin{equation}\begin{split}\label{eq:cLTL}
  \mathbf{v}^a\mathbf{w}(t) &\geq m - M(1- y^{\mu}_t)\\
  \mathbf{v}^a\mathbf{w}(t) &\leq m + M y^{\mu}_t
  \end{split}\end{equation}
  ensure that $\y{\mu}{t}=1$ if and only if the number of robots that satisfy $a$ is greater than or equal to $m$. 
  The rest of the outer logic encodings are not modified and used as before. 
  
 Given a time horizon $h$, the following optimization problem is formed to generate solutions to an instance of Problem \ref{prob:cLTL}:
  
  \begin{equation}
  \label{eq:cLTL_ILP}
  \begin{split}
  \text{Find }\quad & \mathbf{u}(0),\dots,\mathbf{u}(h-1), \mathbf{z^{loop}}, \mathbf{ y^{cLTL+}}\\
  \text{s.t.} \quad&\eqref{eq:total_input_constr},\eqref{eq:cLTL_dyn}, \eqref{eq:loop2},ILP(\mu)\text{ and } \y{\mu}{0} = 1.
  \end{split}
  \end{equation}
  
  We now show how a solution of \eqref{eq:cLTL_ILP} can be mapped to a collection $\{\traj{n}\}$ of individual trajectories. Given initial conditions $\s{n}{0}$, and $\mathbf{u}(0)$, randomly choose $u^j_i$ robots from state $v^i$ and assign their next state as $v^j$. This is always possible since $\mathbf{w}(0)$ is well defined and $\mathbf{u}(0) \in \Upsilon(\mathbf{w}(0))$. Continuing in this manner, we can generate the collection $\{\traj{n}\}$ whose globally synchronous collective execution satisfies the specification $\mu$. Details of a similar constructions of individual trajectories can be found in \cite{nilsson2018control}.
  
  Before proceeding to the asynchronous problem, we remind the reader of two important things: (i) the ILP constraints in \eqref{eq:cLTL_ILP} are consistent with cLTL+ semantics, therefore soundness and completeness guarantees follow from Theorems \ref{prop:sound} and \ref{prop:sync_comp}. (ii)
  An instance of \eqref{eq:cLTL_ILP} has $\mathcal{O}(h(|\to| + |\mu|))$ decision variables and constraints where $|\to|$ is the number of transitions and $|\mu|$ is the length of the formula. Crucially, the number of decision variables and constraints
  does not depend on the number of robots. Therefore, it easily scales to very large number of robots as demonstrated in Section \ref{chp:results}.

\section{Robustness to asynchrony}

Incorporating a concept of time-robustness into our algorithm is useful since it is difficult to perfectly synchronize the motion of robots in real-life applications. This section presents small modifications to the original algorithm that allow one to synthesize trajectories that are robust to bounded synchronization errors.

Synchronous execution assumes that multiple robots can transition from one discrete state to another at the same time. However, this is not always possible in reality where robots may move slower or faster than intended, leading to asynchronous switching times as illustrated in Figure \ref{fig:running_example}. To exemplify, consider a task that requires multiple robots to satisfy a certain proposition at the same time. Let $\mu = \eventually[\phi, m]$ be a $tcp$, $\Pi$ be a collection of trajectories and $K$ be a synchronous collective execution. Assume that $[\phi, m]$ holds for a single time step $t$ and fails to hold for all others, i.e., $(\Pi,K),t \models [\phi, m]$ for some $t$ and $(\Pi,K),t' \not\models [\phi, m]$ for all $t'\neq t$. While such a $\Pi$ satisfies $\mu$ for the synchronous execution it is not always a desirable collection, because if $K$ becomes asynchronous due to one of the robots moving slower than intended, correctness guarantees would no longer be valid and $\mu$ would not be satisfied. This fact motivates us to generate solutions that are robust to such asynchrony. 

For most non-trivial specifications however, finding a collection of trajectories that is robust to unbounded asynchrony would be challenging if not impossible. If, however, an upper bound on the asynchrony is known, one can generate robust solutions such that satisfaction of the task is guaranteed even under the worst-case scenario.

To reason about asynchronicity we define the concept of \emph{anchor time} for collective executions.

\begin{defn}
  For a given collective execution $K=[k_1\dots k_N]$, the \textbf{anchor time mapping} $b_K$ maps the time index $t$ to the smallest local counter value $\ktn{n}{t}$, i.e., $b_K(t) = \min_{n} \ktn{n}{t}$.
\end{defn}

For a $\tau$-bounded collective execution $K \in \mathcal{K}_N(\tau)$ and a given time step $t$, at least one local counter has the value $b_K(t)$ and all other local counters are limited to an interval: $k_n(t)\in [b_K(t), b_K(t) + \tau]$ for all $n$. For the globally synchronous collective execution $K^*$, the anchor time mapping is the identity mapping on $\mathbb{N}$. In Figure~\ref{fig:running_example}, ``anchoring robots'' at each time step are highlighted with a black circle and anchor times are written in bold.

Having defined the ``anchor time'', we now formally define the concept of robust satisfaction for a collection of trajectories.

\begin{defn}
  \label{def:robust_satisfaction}
  A collection of trajectories $\Pi = \{ \traj{1}, \dots, \traj{N} \}$ \textbf{$\boldsymbol{\tau}$-robustly satisfies} $\mu$ at time $t$, denoted
  \begin{equation}
  \Pi,t  \models_\tau \mu,
  \end{equation}
  if and only if for all $K \in \mathcal{K}_N(\tau)$ and for all $T \in b_K^{-1}(t)$,
  \begin{equation}
  (\Pi,K),T \models \mu.
  \end{equation}
\end{defn}

In other words, a specification $\mu$ is $\tau$-robustly satisfied at time $t$ by $\Pi$ if every $\tau$-bounded collective execution $K$ of $\Pi$ satisfies $\mu$ at all time instances $T$ for which the anchor time is $t$. Consider the set of trajectories $\Pi = \{ \traj{1}, \traj{2}, \traj{3} \}$ and an asynchronous collective execution $K$ given in Example~\ref{ex:running_example}. For $\Pi, 1 \models_\tau \mu$ to hold; we must have $(\Pi,K), T \models \mu$, for all $T \in \{ 1,2,3\}$ since $b_K^{-1}(1) = \{1,2,3\}$. Additionally, the same argument must hold for every possible $K' \in \mathcal{K}_N(\tau)$. If $\Pi,0\models_\tau \mu$, we say that the collection $\Pi$ satisfies cLTL+ formula $\mu$ and write $\Pi\models_\tau \mu$ for short.

Before presenting modified encodings that incorporate robustness to asynchrony, we remind the reader that the robots are allowed to stutter as indicated by Definition \ref{local_counter}. Any inner logic formula containing `$\Next$' can always be violated by a single robot when robots are allowed to stutter. Hence, we restrict attention to the case where inner logic formulas are in LTL$_{\setminus\Next}$. We further assume that a cLTL+ formula is given in positive normal form (PNF) according to the following syntax:
\begin{equation}
\label{eq:logic_restr}
\begin{aligned}
    \mu ::= & True \mid tcp  \mid \mu_1 \land \mu_2 \mid \mu_1 \lor \mu_2, \\
            & \mid \Next \mu 
            \mid \mu_1 \until \mu_2 \mid \mu_1 \release \mu_2.
\end{aligned}
\end{equation}

\begin{rem}
The negation operator can be omitted without loss of generality for two reasons. First, any LTL formula can be transformed into positive normal form (PNF) \cite{baier}, where the negation operator appears only before atomic propositions. Since the syntax of cLTL+ is identical to LTL, hence any cLTL+ formula can also be written in PNF where negation only appears before $tcp$'s. Second, given an arbitrary temporal counting proposition $\mu = [\phi, m]$, the statement $\neg \mu$ can be replaced by $\mu' = [\neg\phi, N+1-m]$. Clearly, if there are at least $N+1-m$ robots satisfying $\neg\phi$, then $\phi$ is satisfied by less than $m$ robots; hence, $\mu \equiv \mu'$. Thus, the omission of the negation operator is without loss of generality.
\end{rem}

Finally, we formally define the robust version of Problem \ref{prob1} as follows:

\begin{Problem}\label{prob1_robust}
  Given $N$ robots with dynamics $\{\bigt{n} = (S_n,\ra_n, AP, L_n)\}$, initial conditions $\{\s{n}{0}\}$, a cLTL+ formula $\mu$ given in PNF over LTL$_{\setminus{\Next}}$, and an upper bound on the asynchrony $\tau$, synthesize a collection $\Pi = \{\traj{1}, \dots,\traj{N} \}$ of trajectories $\traj{n}$ that $\tau$-robustly satisfies $\mu$, i.e., $\Pi \models_\tau \mu$.
\end{Problem}

We propose slight modifications to the encodings presented in Section \ref{chp:sol} to generate a collection of trajectories that are $\tau$-robust. Firstly, we define $\tau$ new Boolean vectors $\w{n}{h+1},\w{n}{h+2} \dots \w{n}{h+\tau}$ to represent the state of robot $n$ ``after the loop'' such that $\w{n}{h+k} =\w{n}{l+k}$ for some $l<h$ and $k = 0,1,\dots, \tau$. Secondly, given temporal counting proposition $\mu = [\phi, m]$, we introduce a new decision variable $\zr{n}{\phi}{t}$ for each $\z{n}{\phi}{t}$:

\begin{align}
\label{eq:ap_robust{t}}
\zr{n}{\phi}{t} &= \bigwedge_{k=0}^\tau \z{n}{\phi}{t+k}, \qquad \qquad \text{for } 0\leq t<h.
\end{align}
 
Note that, $\z{n}{\phi}{t}$ is defined for all $t\leq h+\tau$ due to newly defined additional state vectors. These new variables $\zr{n}{\phi}{t}$ can be seen as the robust versions of $\z{n}{\phi}{t}$. In order for $\zr{n}{\phi}{t}=1$ to hold, robot $n$ needs to satisfy the inner logic formula $\phi$ not only at time step $t$, but also for the next $\tau$ steps. Since at anchor time $t$, the local times are bounded as $t \leq \ktn{n}{t} <t+\tau$, this robustification ensures that  robot $\mathcal{R}_n$ satisfies $\phi$ at anchor time $t$, regardless of the asynchrony. 

We now define the modified outer logic constraints. As before, these constraints are constructed recursively. Let $\mu = [\phi, m]$ be a $tcp$ such that $m>1$. Then \eqref{eq:cp} is modified as
\begin{equation}\label{eq:cp_robust}
m>\sum_{n=1}^N \zr{n}{\phi}{t} - M \y{\mu}{t} \geq m-M.
\end{equation}
For the special case where $\mu = [\phi, 1]$, we use 
\begin{equation}\label{eq:cp_robust1}
\begin{split}
1 &> \sum_{n=1}^N \zr{n}{\phi}{t} - M \tildey{\mu}{t} \geq 1-M,\\
N &> \sum_{n=1}^N \z{n}{\phi}{t} - M \bary{\mu}{t} \geq N-M,\\
\y{\mu}{t} &= \tildey{\mu}{t}\lor \bary{\mu}{t}.
\end{split}
\end{equation}

In the synchronous setting, satisfying a temporal counting proposition $\mu$ only for an instant would be enough. However, this is not desirable since robots might not be perfectly synchronized. Equations \eqref{eq:cp_robust} and \eqref{eq:cp_robust1} ensures that all $\tau$-bounded executions satisfy $\mu$ at all time instances with anchor time $t$, by replacing each $\z{n}{\phi}{t}$ with its robust counterpart $\zr{n}{\phi}{t}$. As a result, even in the worst case of asynchrony, there would be an instant where $\mu$ is satisfied.

Encodings of some of the outer level operators are also modified slightly. For conjunction and next operators, no modification is needed: if $\mu = \mu_1\land \mu_2$ and $\eta = \bigcirc \mu$ where each $\mu_i$ is a cLTL+ formula in PNF form, then $\y{\mu}{t}  = \y{\mu_1}{t} \land \y{\mu_2}{t}$ and $\y\eta{t} = \y\mu{t+1}$.

Disjunction is encoded in two different ways: If all operands are temporal counting propositions, i.e, $\mu = \bigvee_i \mu_i$ where $\mu_i = [\phi_i,m_i]$, then
\begin{equation}\label{eq:dis_robust}
\y{\mu}{t}  = \bigvee_i \y{\mu_i}{t} \lor \left(\sum_{n=1}^N \zr{n}{(\bigvee_i\phi_i)}{t} > \sum_{i} (m_i-1)\right)
\end{equation}
is used.
Note that $\zr{n}{(\bigvee_i\phi_i)}{t}$ is only defined if all $\mu_i$ are $tcp$. In all other cases, we use the standard encoding:
\begin{equation}\label{eq:dis_robust2}
\y{\mu}{t}  = \bigvee_i \y{\mu_i}{t}.
\end{equation}

If the disjunction contains both $tcp$s and other formulas, then it can be re-written to leverage the less conservative encodings in \eqref{eq:dis_robust}.
The motivation behind \eqref{eq:dis_robust} is that, a collection $\{\traj{n}\}$ might not $\tau$-robustly satisfy neither $\mu_1$ or $\mu_2$ but can still $\tau$-robustly satisfy $\mu_1 \lor\mu_2$ as demonstrated by the following example:
\begin{example}\label{ex:dis}
  Let $\mu = \mu_1 \lor \mu_2 = [\phi_1,2] \lor [\phi_2,2]$ be a cLTL+ formula and let a collection $\Pi =\{\traj{1},\traj{2},\traj{3}\}$ be given with the following traces:
  \begin{align*}
  \sigma(\traj{1})&=\{\phi_1\} \;\{\phi_1\}\;\{\phi_1\}\;\dots\\
  \sigma(\traj{2})&=\{\phi_1\} \;\{\phi_2\}\;\{\phi_2\}\;\dots\\  
  \sigma(\traj{3})&=\{\phi_2\} \;\{\phi_2\}\;\{\phi_2\}\;\dots
  \end{align*}
  If $\tau=1$, the collection $\Pi$ does not robustly satisfy neither $\mu_1$ nor $\mu_2$ at anchor time $0$. On the other hand, for all time steps with anchor time $t$, any arbitrary $\tau$-bounded asynchronous execution satisfies either $\mu_1$ or $\mu_2$. This implies that $\Pi\models_\tau \mu.$
\end{example}  

Equation \eqref{eq:dis_robust} limits the number of robots who neither satisfy $\phi_1$ nor $\phi_2$ at anchor time $t$. By doing so, it ensures that either $\mu_1$ or $\mu_2$ is satisfied by the collection. Observe that \eqref{eq:dis_robust} reduces to standard encodings for $\tau=0$.

Due to changes in the outer disjunction encodings, the outer ``until'' operator needs to be modified as well. Let $\eta = \mu_1 \until \mu_2$ where $\mu_i$ is a cLTL+ formula for $i=1,2$. Then 
\begin{equation}\label{eq:until_robust}
\begin{split}
\y{\eta}{t} &= \y{\mu_1\lor\mu_2}{t} \land \left( \y{\mu_2}{t} \lor \y{\eta}{t+1}  \right), \quad t\leq h-2,\\
\y{\eta}{h-1} &= \y{\mu_1\lor \mu_2}{h-1} \land\\
& \quad \left( \y{\mu_2}{h-1} \lor \left(\bigvee_{t=0}^{h-1} \left(\z{loop}{}{t} \land \tildey{\eta}{t} \right)\right) \right), \\
\tildey{\eta}{t}  &= \y{\mu_1\lor\mu_2}{t} \land \left( \y{\mu_2}{t} \lor \tildey{\eta}{t+1} \right), \quad  t\leq h-2,\\
\tildey{\eta}{h-1} &= \y{\mu_2}{h-1}.
\end{split}
\end{equation}

If $\mu_2$ is $\tau$-robustly satisfied at time $t$, then $\eta$ is $\tau$-robustly satisfied at time $t$, by definition of `until'. In this case both $\y{\mu_2}{t} = 1$ and $\y{\mu_1\lor \mu_2}{t}=1$ would hold, hence $\y{\eta}{t}$ would evaluate to $1$, as expected. If $\mu_2$ is \emph{not} $\tau$-robustly satisfied at time $t$, \eqref{eq:until_robust} enforces $\eta$ and $\mu_1 \lor \mu_2$ (instead of $\mu_1$ as in \eqref{eq:until}) to be $\tau$-robustly satisfied at anchor times $t+1$ and $t$, respectively. This again guarantees that $\eta$ is  $\tau$-robustly satisfied at anchor time $t$. Auxiliary variables are used again to ensure $\mu_2$ is satisfied at some point. As before, \eqref{eq:until_robust} reduces to the standard until encodings when $\tau=0$. 

Furthermore, we provide the encodings for the ``release'' operator, which is identical to the standard encodings used in the literature: if $\eta = \mu_1 \release \mu_2$, then
\begin{equation}\label{eq:release}
\begin{split}
\y{\eta}{t} &= \y{\mu_2}{t} \land \left( \y{\mu_1}{t} \lor \y{\eta}{t+1}  \right), \quad t\leq h-2,\\
\y{\eta}{h-1} &= \y{\mu_2}{h-1} \land\\
&\quad \left( \y{\mu_1}{h-1} \lor\left(\bigvee_{t=0}^{h-1} \left(\z{loop}{}{t} \land \tildey{\eta}{t} \right)\right) \right), \\
\tildey{\eta}{t}  &= \y{\mu_2}{t} \land \left( \y{\mu_1}{t} \lor \tildey{\eta}{t+1} \right), \quad  t\leq h-2,\\
\tildey{\eta}{h-1} &= \y{\mu_2}{h-1}.
\end{split}
\end{equation}

Release encodings guarantees that if $\mu_1$ is $\tau$-robustly satisfied for all anchor times $t$, then $\mu_2$ is $\tau$-robustly satisfied for all times up to and including $t$. The key difference from the until operator is that $\mu_1$ does not have to be satisfied at all if $\mu_2$ is satisfied for all times.

Given an instance of Problem \ref{prob1_robust} and a horizon length $h$, let $ILP_{\tau}(\mu)$ be the set of ILP constraints and $\mathbf{(z, r, y)^{cLTL+}}$ the decision variables created by using the robust encodings \eqref{eq:cp_robust}-\eqref{eq:release}. We obtain the robust solution by solving the following optimization problem:
\begin{equation}\label{eq:main_robust}
\begin{split}
\text{Find }\quad &\{\mathbf{w}_n\}, \mathbf{z^{loop}}, \mathbf{(z, r, y)^{cLTL+}}\\
\text{s.t.} \quad&\eqref{eq:dynamics}, \eqref{eq:loop},ILP_{\tau}(\mu)\text{ and } \y\mu 0 = 1.
\end{split}
\end{equation}

The following theorems show that the solution method proposed for the asynchronous case is sound, and also complete under certain conditions. The proofs are provided in the Appendix.

\begin{prop}
\label{prop:robustness_is_sound}
    If the optimization problem in \eqref{eq:main_robust} is feasible for a cLTL+ formula $\mu$ given in PNF over LTL$_{\setminus\bigcirc}$, then a collection $\Pi = \{ \traj{1}, \dots, \traj{N} \}$ of trajectories can be extracted such that $\Pi \models_\tau \mu$. That is, the modified encodings in \eqref{eq:ap_robust{t}}-\eqref{eq:release} are sound.
\end{prop}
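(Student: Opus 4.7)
The plan is to extract trajectories $\Pi = \{\pi_n\}$ from $\{\mathbf{w}_n\}$ exactly as in the proof of Theorem~\ref{prop:sound}, using \eqref{eq:dynamics} and \eqref{eq:loop} to obtain infinite prefix-suffix trajectories; the additional state vectors $\w{n}{h+1},\ldots,\w{n}{h+\tau}$ simply mirror the first $\tau$ states of the suffix, so every variable defined on $\{0,\ldots,h+\tau\}$ can legitimately be read as a fact about the full infinite trace $\sigma(\pi_n)$. I will then proceed by structural induction on $\mu$ in PNF, proving the invariant that $\y{\mu}{t}=1$ implies, for every $\tau$-bounded $K$ and every $T\in b_K^{-1}(t)$, $(\Pi,K),T\models\mu$.

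The base case rests on a single observation. Since the inner-logic encodings \eqref{eq:ap}--\eqref{eq:until} remain sound \cite{biere2linear}, $\z{n}{\phi}{t}=1$ implies $\sigma(\pi_n),t\models\phi$; then \eqref{eq:ap_robust{t}} yields $\zr{n}{\phi}{t}=1 \Rightarrow \sigma(\pi_n),t+k\models\phi$ for every $k\in[0,\tau]$. Any $\tau$-bounded execution $K$ with $b_K(T)=t$ satisfies $k_n(T)\in[t,t+\tau]$ for all $n$, so $\zr{n}{\phi}{t}=1$ already guarantees $\sigma(\pi_n),k_n(T)\models\phi$. From here \eqref{eq:cp_robust} directly yields the base case for $tcp$'s with $m>1$. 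For $m=1$, \eqref{eq:cp_robust1} must be handled in two subcases: either a single robot robustly satisfies $\phi$ across the entire window (via $\tildey{\mu}{t}$), or \emph{all} $N$ robots satisfy $\phi$ at the anchor (via $\bary{\mu}{t}$); since at least one $k_n(T)$ equals $t$, the second branch still guarantees at least one witness at $T$.

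For the inductive step, conjunction and outer $\Next$ are immediate. The outer-disjunction case uses \eqref{eq:dis_robust}: if no individual disjunct has $\y{\mu_i}{t}=1$, the encoding forces $\sum_n \zr{n}{\bigvee_i\phi_i}{t}\geq \sum_i m_i - I + 1$. At any $T\in b_K^{-1}(t)$ each counted robot satisfies some $\phi_i$ at its local time $k_n(T)$, and a pigeonhole argument with bins of capacity $m_i-1$ yields at least one $i$ with $m_i$ witnesses, so $(\Pi,K),T\models\mu_i$. For outer until, unfolding \eqref{eq:until_robust} from $\y{\mu_1\until\mu_2}{0}=1$ (closing the loop through $\tildey{\eta}{t}$) produces an anchor-time index $L$ such that $\y{\mu_2}{L}=1$ and $\y{\mu_1\lor\mu_2}{l}=1$ for all $0\leq l\leq L$. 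Using monotonicity of $b_K$, I pick the smallest $l^*\geq 0$ with $(\Pi,K),T+l^*\models\mu_2$, which exists because $b_K$ eventually reaches $L$; for every $l'<l^*$ the failure of $\mu_2$ combined with the robust satisfaction of $\mu_1\lor\mu_2$ forces $\mu_1$ to hold at $T+l'$, matching the until semantics. Release \eqref{eq:release} is dual and uses the same anchor-time monotonicity argument, splitting on whether some anchor time witnesses $\mu_1$ or $\mu_2$ holds at all anchor times (the latter being certified via the loop and $\tildey{\eta}$).

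The main obstacle I expect is the outer until (and symmetrically release), because it requires translating anchor-time statements---which live on the integer indices of the ILP---into statements about actual execution times $T$, which interleave arbitrarily as $b_K$ advances. The crucial point is that the replacement of $\mu_1$ by $\mu_1\lor\mu_2$ in \eqref{eq:until_robust} is precisely what is needed to cover intermediate execution steps between two consecutive anchor times, where only $\mu_2$ (not $\mu_1$) may hold; choosing the witness $l^*$ as the \emph{smallest} index at which $\mu_2$ holds is what reconciles this with the standard until semantics. A secondary bookkeeping step is ensuring that an until/release witness found inside the suffix, via the auxiliary $\tildey{\eta}$ variables and $\mathbf{z^{loop}}$, unrolls into a genuine witness along the infinite execution.
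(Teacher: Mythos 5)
Your proposal is correct and follows essentially the same route as the paper's proof: the same recursive soundness argument over the PNF operators, the same two-branch treatment of $tcp$'s ($m>1$ via \eqref{eq:cp_robust}, $m=1$ via the two disjuncts of \eqref{eq:cp_robust1} with the anchor robot as witness), the same pigeonhole argument for outer disjunction, and the same reliance on replacing $\mu_1$ by $\mu_1\lor\mu_2$ in the until encoding. If anything, your treatment of until is more explicit than the paper's, which compresses the anchor-time-to-execution-time translation into the one-line claim that $\Pi,t\models_\tau\mu_1\lor\mu_2$ together with $\Pi,t+1\models_\tau\eta$ implies $\Pi,t\models_\tau\eta$, whereas you spell out the choice of the minimal witness $l^*$ and why intermediate execution steps are covered.
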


As shown in Example \ref{ex:dis}, the disjunction operator introduces some conservatism. Furthermore, the disjunction operation is used in the encodings of ``until'' and ``release''. Therefore, completeness results from Section IV are no longer valid in the asynchronous setting. The next result clarifies the conditions when the robust encodings are complete:

\begin{prop}
\label{prop:robustness_is_complete}
  Given a cLTL+ formula $\mu$ given in PNF over LTL$_{\setminus\bigcirc}$, if all of the following hold, then there exists a finite $h$ such that \eqref{eq:main_robust} has a solution (i.e., the modified encodings are complete).
  \begin{itemize}
    \item there exists a collection $\Pi = \{ \traj{1}, \dots, \traj{N} \}$ of trajectories in prefix-suffix form that $\tau$-robustly satisfies $\mu$, i.e., $\Pi\models_\tau \mu$,
    \item $AP$ is a set of mutually exclusive atomic propositions, i.e., for all $\phi_1,\phi_2 \in AP$; $\phi_1\land\phi_2= False$,
    \item the specification $\mu$ over $AP$ is on the form 
    \begin{equation}
    \label{eq:complete}
    \mu = True \mid tcp \mid \mu_1 \land \mu_2 \mid tcp_1 
    \lor tcp_2 \mid  tcp_1 \until tcp_2 \mid \bigcirc \mu
    \end{equation}
  where $tcp,tcp_1,tcp_2\in AP\times \Phi$ and $\mu,\mu_1,\mu_2$ are obtained according to \eqref{eq:complete}.
  \end{itemize}
\end{prop}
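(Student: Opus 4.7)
The plan is to prove completeness by structural induction on the grammar \eqref{eq:complete}. Given a prefix-suffix collection $\Pi=\{\pi_n\}$ with $\Pi\models_\tau\mu$, I would first choose a horizon $h$ large enough to contain one full period of the prefix-suffix structure of every $\pi_n$ together with a $\tau$-buffer, and then construct an explicit feasible assignment to all decision variables of \eqref{eq:main_robust} directly from $\Pi$. Concretely: $\mathbf{w}_n$ is read off from $\pi_n$ and satisfies \eqref{eq:dynamics}; $\mathbf{z^{loop}}$ marks the loop point and satisfies \eqref{eq:loop}; each $z_n^\phi(t)$ is set to $1$ iff $\pi_n,t\models\phi$ (consistent with the inner-LTL encodings \eqref{eq:ap}--\eqref{eq:until} by classical bounded-model-checking completeness); each $r_n^\phi(t)$ is defined as $\bigwedge_{k=0}^\tau z_n^\phi(t+k)$; and finally $y^\mu_t=1$ iff $\Pi,t\models_\tau\mu$. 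What remains is to verify that every outer-logic constraint is satisfied under this assignment, which I would carry out by induction on $\mu$.

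The easy cases are direct. $\mu=\mathit{True}$ is trivial, the conjunction $\mu_1\land\mu_2$ follows from $\Pi,t\models_\tau(\mu_1\land\mu_2)$ being equivalent to $\Pi,t\models_\tau\mu_i$ for each $i$, and the next operator reduces to a single-step time shift. For a temporal counting proposition $\mu=[\phi,m]$ with $m>1$, robust satisfaction at anchor time $t$ unfolds to: for every choice $c\colon[N]\to[t,t+\tau]$ realizable as $(k_n(T))_n$ with some $K\in\mathcal{K}_N(\tau)$ and $T\in b_K^{-1}(t)$, at least $m$ robots have $\pi_n,c(n)\models\phi$; this is equivalent to at least $m$ robots having $\pi_n,s\models\phi$ for \emph{every} $s\in[t,t+\tau]$, i.e.\ $\sum_n r_n^\phi(t)\geq m$, matching \eqref{eq:cp_robust}. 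The $m=1$ case is handled similarly but also admits the weaker disjunct captured by \eqref{eq:cp_robust1}.

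The disjunction case $\mu=tcp_1\lor tcp_2$ with $tcp_i=[\phi_i,m_i]$ is where mutual exclusivity of atomic propositions becomes essential. Assume $\Pi,t\models_\tau\mu$ but neither $\Pi,t\models_\tau tcp_1$ nor $\Pi,t\models_\tau tcp_2$. I would argue by contradiction that $\sum_n r_n^{\phi_1\lor\phi_2}(t)\geq m_1+m_2-1$. If not, at most $m_1+m_2-2$ robots, call them \emph{good}, satisfy $\phi_1\lor\phi_2$ throughout $[t,t+\tau]$. Mutual exclusivity partitions the good robots into those always in $\phi_1$ (count $\alpha$), always in $\phi_2$ (count $\beta$), and switchers (count $\gamma$); the two individual non-satisfactions force $\alpha<m_1$ and $\beta<m_2$. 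From $\alpha+\beta+\gamma\leq m_1+m_2-2$ a short counting argument produces a split of the switchers between the two disjuncts, together with a choice of local times in $[t,t+\tau]$ for the remaining ``bad'' robots (pointing each to a slot where neither $\phi_i$ holds), under which strictly fewer than $m_1$ robots satisfy $\phi_1$ and strictly fewer than $m_2$ satisfy $\phi_2$ at their assigned local times. Realizing this choice via a $\tau$-bounded $K$ with $T\in b_K^{-1}(t)$, which is possible by the flexibility in Definition~\ref{local_counter}, contradicts $\Pi,t\models_\tau\mu$ and completes the case.

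The hardest step, and the main obstacle, is the until case $\mu=tcp_1\until tcp_2$. Here I would exhibit a single ``uniform'' witness $t^*\geq t$, chosen as the smallest time for which $\Pi,t^*\models_\tau tcp_2$, and then argue (i) $t^*$ is finite and in fact bounded by the chosen horizon $h$, and (ii) $\Pi,t'\models_\tau(tcp_1\lor tcp_2)$ for every $t'\in[t,t^*]$. Finiteness of $t^*$ combines robustness with the prefix-suffix hypothesis: any worst-case $\tau$-bounded execution must witness $tcp_2$ eventually, and because the trajectory is ultimately periodic the earliest robust witness must recur within one period plus the $\tau$-buffer, so $h$ can be chosen accordingly. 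For (ii), if some $t'<t^*$ had $\Pi,t'\not\models_\tau(tcp_1\lor tcp_2)$, I would stitch together a $\tau$-bounded execution in which neither $tcp_1$ nor $tcp_2$ holds at an anchor time corresponding to $t'$ and no earlier anchor time witnesses $tcp_2$, violating $\Pi,t\models_\tau(tcp_1\until tcp_2)$. With $t^*$ and the chain of intermediate disjunctions in hand, setting $y^\mu_{t'}=1$ for $t\leq t'\leq t^*$ together with $y^{tcp_2}_{t^*}=1$ and $y^{tcp_1\lor tcp_2}_{t'}=1$ satisfies \eqref{eq:until_robust}; the disjunction case already proved supplies feasibility for each $y^{tcp_1\lor tcp_2}_{t'}$. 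Combining the cases closes the induction and yields completeness.
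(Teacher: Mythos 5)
Your proposal follows essentially the same route as the paper's proof: operator-by-operator completeness, with the $tcp$ case handled by placing each non-robustly-satisfying robot at a local time in $[t,t+\tau]$ where $\phi$ fails (realized as a $\tau$-bounded execution anchored at $t$), and the disjunction case handled by exactly the same $m_1+m_2-1$ counting argument that uses mutual exclusivity to split the ``switching'' robots between the two disjuncts and derive a contradiction. The only real divergence is that you spell out the until case (existence of a uniform witness $t^*$ with the intermediate disjunctions robustly satisfied), which the paper dispatches in a single sentence as following from the conjunction and disjunction cases; your elaboration is consistent with what the paper's encoding \eqref{eq:until_robust} implicitly requires, though the ``stitching'' of per-anchor-time violations into one monotone $\tau$-bounded execution is only sketched in your write-up --- as it is, even more tersely, in the paper.
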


The commonly used ``$\eventually (eventually)$'' operator can also be defined without losing completeness: $\eventually [\phi,m]\doteq [\neg\phi,N-m+1]\until [\phi,m]$. In most real world applications, several tasks are required to be completed in conjunction, which can be expressed as in \eqref{eq:complete}. Furthermore, many interesting specifications including safety $(\always)$, liveness ($\always\eventually$), etc., can be captured in the form of \eqref{eq:complete} for a given time horizon $h$. For example, safety specifications can be encoded as $\always [\phi,m] = [\phi,m] \land \bigcirc[\phi,m] \land\dots \land \bigcirc^{h-1} [\phi,m]$\footnote{The notation $\bigcirc^{h-1}$ corresponds to $(h-1)$ concatenated $\bigcirc$ operators}. 

\begin{rem}
The alternative solution method proposed in Section \ref{chp:solcLTL} uses more efficient encodings when the specifications are given in $cLTL$. However, these encodings use aggregate dynamics, therefore it is not possible to keep track of identities of the robots during synthesis. Hence, robust solutions cannot be generated with this alternative method.
\end{rem}

Robustifying the trajectories increases the complexity as a function of $\tau$. In particular, an instance of \eqref{eq:main_robust} has $\mathcal{O}(\tau N(|S_n| + h|\mu|))$ additional decision variables and $\mathcal{O}(\tau N^2 h|S_n|)$ additional constraints compared to \eqref{eq:main}. The effect of these additional variables and constraints on solution time is shown in Section VII.

\section{Extensions and Discussion}

In this section, we discuss two possible extensions of cLTL+. Firstly, we show how to handle continuous-state dynamics directly instead of transition systems. Secondly, we provide an extension of cLTL+ syntax that allows tasks to be assigned to specific robots or robot groups. 

\subsection{Extension to Continuous-State Dynamics}
\label{ssec:continuos}

Up to now, we assumed that robot dynamics are modeled by transition systems. Given continuous dynamics, discrete abstraction techniques could be used to obtain transition systems. However, abstraction computations are costly and do not scale well with the number of dimensions. This section provides slight modifications to the earlier encodings such that continuous-state discrete-time dynamics can be handled directly.

Assume that the robot dynamics are given as 
\beq \label{eq:dynamics_cont}
\w n {t+1} = f_n(\w n {t}, \uu n t),
\eeq
where $\w n {t} \in \br^{d_{w}}$ and $\uu n t \in \br^{d_{u}}$ denote the state and input of robot $n$ at time $t$, respectively. 

The first modification is to replace the constraints in \eqref{eq:dynamics} with \eqref{eq:dynamics_cont} for all $n \in [N]$ and for all $t$. The loop constraints in \eqref{eq:loop} are then modified as follows:

\begin{equation}
  \begin{split}\label{eq:loop_cont}
  \w n {h} \leq & \w n {t} + M(1-\z{loop}{}{t}), \\
  \w n {h} \geq & \w n {t} - M(1-\z{loop}{}{t}),
  \end{split}
\end{equation}
where $M$ is a sufficiently large number. Equation \eqref{eq:loop_cont} enforces a loop by constraining $\w n {h}$ to be equal to $\w n {t}$ for some $t$.

Next, we modify \eqref{eq:ap} to accommodate continuous states. We assume that each atomic proposition $a \in AP$ corresponds to a convex polytope $\{ w \in \br^{d_w} \mid H^{a} w \leq h^{a} \}$, where $H^{a} \in \br^{d_a\times d_w}
$ and $h^{a} \in \br^{d_a}
$. Then for each atomic proposition and for all $t$ and $n \in [N]$, we replace the inequality constraints in \eqref{eq:ap} with the following:
\begin{subequations}\label{eq:ap_cont}
  \begin{align}\label{eq:ap_cont1}
  H^{a} \w n {t} &\leq h^{a} + M(1 - e_n^{a}(t)), \\
  \label{eq:ap_cont2}
  H^{a} \w n {t} &\geq h^{a} + \epsilon - M e_n^{a}(t), \\
  \label{eq:ap_cont3}
  \z{n}{a}{t} &\leq e_n^{a,(i)}(t) \qquad \text{for } i=1,\dots,d_a,\\
  \label{eq:ap_cont4}
  \z{n}{a}{t} &\geq 1- d_a + \mathbf{1} e_n^{a}(t),
  \end{align}
\end{subequations}

where $\epsilon$ is an infinitesimally small and $M$ is a sufficiently large number, and $e_n^a$ is a binary vector of size $d_a$. The $i^{th}$ row of $e_n^a$ is denoted by $e_n^{a,(i)}(t)$ and is used to check the satisfaction of the $i^{th}$ linear constraint. In equations \eqref{eq:ap_cont1} and \eqref{eq:ap_cont2}, the $i^{th}$ linear constraint is satisfied if and only if $e_n^{a,(i)}(t)=1$. Furthermore, with equations \eqref{eq:ap_cont3} and \eqref{eq:ap_cont4}, we ensure that $\w{n}{t} \in \{ w \in \br^{d_w} \mid H^{a} w \leq h^{a} \}$ if and only if $\z{n}{a}{t} =1$. This result is identical to \eqref{eq:ap}; thus, no other modifications are needed to use $\z{n}{a}{t}$ in \eqref{eq:neg}-\eqref{eq:cp}.

Finally, we modify the optimization problem to account for auxiliary variables. Let $\mathbf{e}^{cLTL+}$ denote the set of all auxiliary variables created by \eqref{eq:ap_cont}. We form the following optimization problem to find solutions:

\begin{equation}
\label{eq:main_cont}
\begin{split}
\text{Find }\quad &\{\uu{n}{0}\dots \uu{n}{h-1}\}, \mathbf{z^{loop}}, \mathbf{(e, z, y)^{cLTL+}}\\
\text{s.t.} \quad&\eqref{eq:dynamics_cont}, \eqref{eq:loop_cont},ILP(\mu)\text{ and } \y \mu 0 = 1.
\end{split}
\end{equation}

\begin{rem} 
  Given initial condition $w^{n}_0$ and inputs $\{\uu{n}{0}\dots \uu{n}{h-1}\}$, state $\w n {t}$ can be found by \eqref{eq:dynamics_cont}. Hence, no decision variables are needed for the states.
\end{rem}

\todo[inline]{Maybe cite Vasus paper here too, or write more about continuous-state part in introduction}

\begin{rem}
  The resulting feasibility problem is a \emph{mixed integer linear program (MILP)} if linear continuous-state dynamics are used.
\end{rem}

As it is stated before, obtaining discrete abstractions from continuous dynamics is computationally expensive: the size of the transition system typically grows exponentially with the dimensionality of robot states. Since each discrete state in the transition system introduces a binary decision variable in the discrete-space formulation, the size of the optimization problem in \eqref{eq:main} can grow quickly. On the other hand, in \eqref{eq:main_cont}, each continuous state is represented with a single continuous decision variable. While the number of auxiliary binary decision variables introduced by \eqref{eq:ap_cont} depends on the specific problem instance, the continuous approach can be favorable when compared to an abstraction approach. 

\subsection{Extension of cLTL+ Syntax}

This section provides a straightforward extension of the cLTL+ syntax inspired by censusSTL proposed in \cite{xu2016census}. Up to now, the logic is oblivious as to which robot satisfies what atomic proposition, or task. In most multirobot systems, robots have heterogeneous capabilities and certain tasks can only be performed by a specific subset of robots. For example, imagine a collection of drones and a reconnaissance mission that includes, among other things, taking aerial photos of a region. If not all of the drones have cameras, one might want to identify those that can take photos and require subtasks that involve photography to be completed by this subset. Similarly, in a collective of robots where one robot is designated to be the leader it may be desirable to specify that the other robots periodically have to report to the leader.

To be able to specify such tasks, the temporal counting propositions ($tcp$) can be modified to contain the subset of robots that are designated with satisfying the inner logic formula. Redefine $tcp$ as a tuple consisting of an atomic proposition, a non-empty set of robots and a non-negative integer, i.e., $\mu = [\phi, \mathcal{S}, m] \in \Phi \times 2^{[N]} \times \bn$. Here satisfaction of $\mu$ at time $t$ requires at least $m$ robots from the subset $\mathcal{S} \in 2^{[N]}$ to satisfy $\phi$ at time $t$. By modifying $tcp$'s in this manner we can assign individual tasks to a specific subset of robots. To exemplify, given a collective $\mathcal{S}$ of drones, let $\mathcal{S}_c \in \mathcal{S}$ denote those with camera. Then the temporal counting proposition $tcp = [a , \mathcal{S}_c, m]$ would be satisfied if at least $m $ drones from $\mathcal{S}_c$ visit regions marked by $a\in AP$ to take aerial photos.

Let $\mu = [\phi, \mathcal{S}, m]$.
 We modify \eqref{eq:cp} as follows to account for the change in $tcp$ definition:

\begin{equation}
\label{eq:census}
m>\sum_{n\in \mathcal{S}} \z{n}{\phi}{t} - M \y {\mu} t 
\geq m-M.
\end{equation}

Similarly, for the robustness case, we modify \eqref{eq:cp_robust} as follows:

\begin{equation}\label{eq:cp_robust_ext}
m > 
\sum_{n\in \mathcal{S}} \zr{n}{\phi}{t} - M \y {\mu} t \geq m-M.
\end{equation}

It is straightforward to see that \eqref{eq:census} and \eqref{eq:cp_robust_ext} preserve all of the soundness and completeness guarantees for this extension.

\section{Results}
\label{chp:results}

This section demonstrates the proposed method on an emergency response and presents scalability results. All experiments are run on a laptop with 2.5 GHz Intel Core i7 and 16 GB RAM and Gurobi \cite{gurobi} is used as the underlying ILP solver. Our implementation can be accessed from \url{https://github.com/sahiny/cLTL-synth}.


\subsection{Emergency response example}

Assume $N=10$ robots are deployed in a workspace, which can be seen from Figure \ref{fig:arena}. The workspace is discretized into $10 \times 10$ cells and each robot is modeled with a transition system with $100$ states, each corresponding to a single cell. At each step, robots can either choose to stay put or travel to any of the four neighboring cells without leaving the workspace. We remark that a monolithic LTL solution for this problem would have required constructing a transition system with $100^{10}$ states.

\begin{figure}[t] 
  \centering 
  \includegraphics[width=.90\linewidth]{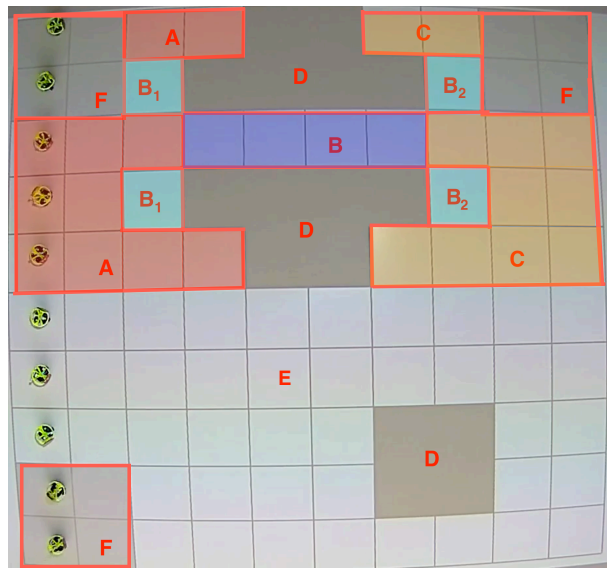}
  \caption{Workspace: $A$, $C$, and $E$ represent different neighborhoods, $B$ represents a fragile bridge, $F$ represents charging stations and $D$ represents inaccessible zones.}\label{fig:arena} 
\end{figure}

The specification is of the form $\mu=\bigwedge_{i=1}^8 \mu_i$, including: 
\begin{itemize}
  \item collision with obstacles, which are marked with $D$, should be avoided ($\mu_1 = \always\neg[D,1]$).
  
  \item the bridge, marked by $B$, must not be occupied by more than $2$ robots ($\mu_2 = \always\neg[ B,3]$).
    
  \item each robot should visit charging stations, marked by $F$, infinitely many times ($\mu_3 = [\always\eventually F, N]$).
  
  \item region $A$ and $C$ must be populated with at least half of the robots and should be left empty, infinitely many times ($\mu_4 = \always\eventually[A,N/2]$, $\mu_5 = \always\eventually[C,N/2]$, $\mu_6 = \always\eventually(\neg[ A,1])$ and $\mu_7 = \always\eventually(\neg [C,1]$)).
  
  \item bridge should be empty until it is inspected from both sides ($\mu_8 = (\neg[ B,1]) \until \left( [B_1,1] \land [B_2,1] \right)$).
\end{itemize}
In addition to these specifications, we require that robots avoid collisions with each other.
We posit a time horizon $h=35$ and solve the optimization problem \eqref{eq:main_robust} for the synchronous case $\tau=0$. 

Important frames obtained from the $\tau=0$ solution are shown in Fig.~\ref{fig:solution}. Even though all specifications are met by this solution for a synchronous execution, it could easily break with the introduction of asynchrony. For instance, note that region $A$ is emptied (resp. region $C$ is populated with more than $5$ robots) only for a single time step at $t=16$ (resp. $t=18$). Hence, a single-step delay of a single robot could result in violation of $\mu_6$ (resp. $\mu_5$). Similarly, a robot enters the bridge for the first time at $t=11$, which is the exact same time step when the bridge is inspected from both sides. If one of the robots inspecting the bridge moves slower than intended, $\mu_8$ would be violated.

To prevent such violations, we set $\tau=2$ and solve the resulting optimization problem. As it is shown in Fig.~\ref{fig:robust_solution}, this time the number of robots in $A$ (resp. in $C$) is greater than or equal to $5$, starting from $t=1$ until $t=3$ (resp. from $t=20$ until $t=22$). Furthermore, when the number of robots in region $A$ is greater than or equal to $5$, there are no robots in region $C$, and vice versa. Therefore, even in the worst case of bounded asynchrony, there will be at least one time instance where $A$ is populated with $5$ robots and another time instance where $A$ is empty. The same arguments hold for region $C$, as well. Additionally, the robots are more careful when crossing and the bridge: the bridge is first inspected at $t=11$ and no robots enter the bridge until $t=13$. Thus, the specification $\mu$ is satisfied even in the worst case of asynchrony.

We have implemented the trajectories extracted from the robust solution on real ground robots in Robotarium \cite{pickem2017robotarium}. In this experiment, robots track their respective trajectories using feedback from a top-mounted camera, and do not communicate with each other during runtime. The asynchrony is limited to $2$ discrete transitions. The video of the experiment can be viewed from \url{https://youtu.be/u8G-ewEEO6E}. As can be seen in the video, robots satisfy their tasks and avoid collisions despite the asynchrony.

\begin{figure*}[t] 
  \centering 
  \includegraphics[width=.92\linewidth]{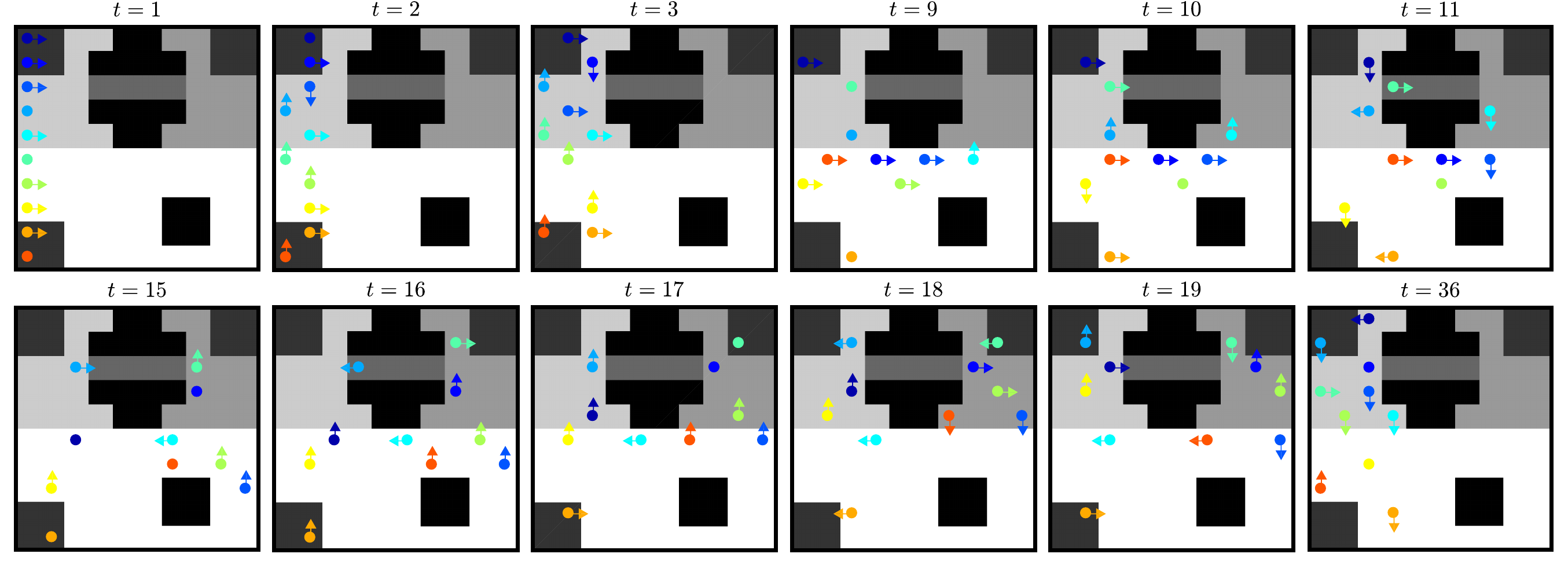} 
  \caption{Important frames from the synthesized non-robust trajectories, where arrows indicate direction of movement. The loop starts at frame $t=4$, thus the state at $t=4$ is identical to the state at $t=36$. Time $t=16$ and $t=18$ are the only time steps where region $A$ and $C$ are emptied and populated with more than 5 robots, respectively. The bridge is empty until two robots inspect it from different sides at $t=11$. Every robot visits the charging station and avoids collisions. }\label{fig:solution} 
\end{figure*}

\begin{figure*}[t] 
  \centering 
  \includegraphics[width=.92\linewidth]{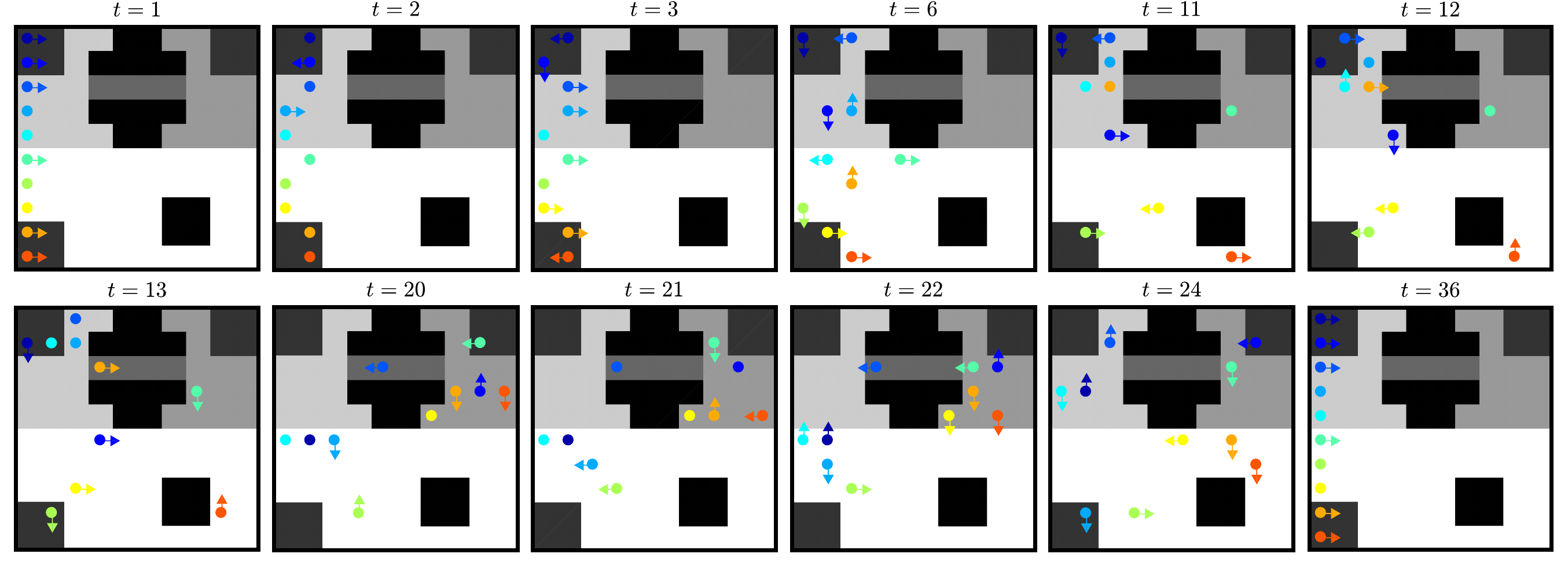} 
  \caption{Important frames from the synthesized robust trajectories, where arrows indicate direction of movement. The loop starts at frame $t=1$, which is identical to frame $t=36$. The number of robots in region $A$ ($C$) is $5$ ($0$) between $t=1$ and $t=3$ and $0$ ($5$) between $t=20$ and $t=22$, which implies that $\mu_4$ to $\mu_7$ are robustly satisfied at anchor time $t=1$. No robots use the narrow passage until it has been examined by both sides between $t=11$ and $t=13$, and the number of robots on the bridge never exceeds $2$; hence $\mu_2$ and $\mu_8$ are robustly satisfied.
    Every robot visits the charging station and avoids collisions.}\label{fig:robust_solution}
\end{figure*}

\subsection{Numerical examples}

To examine the scalability of the proposed approach, we use the emergency response example explained in the previous section as a base example with the following parameters: the number of robots $N=10$, solution horizon $h= 35$ and robustness parameter $\tau=0$. We then vary one of these parameters at a time and report the average solution times over $5$ runs in Table \ref{tab:results}. 

We report results for three different implementations in Table \ref{tab:results}. The first implementation uses the encodings proposed in this paper. The second implementation is a special encoding that can only be used for $4$-connected grid environments. That is, robots move in a two dimensional gridded environment only horizontally or vertically. In this implementation, the number of Boolean variables needed to denote the state of the robot on a $x\times y$ gridded environment is $x+y$ as opposed to $xy$ for a general implementation. A smaller number of decision variables decreases the solution times significantly. We also implement the continuous-state extension proposed in Section \ref{ssec:continuos}. As can be seen in Table \ref{tab:results}, solution times can be reduced significantly if the encodings that are most appropriate for the problem at hand are used.

Additionally, we examine the solution times for different encodings when specifications are given in cLTL and the robots have identical dynamics. Assume that the transition system $T = (S, \ra, AP, L)$, where $\ra$ is generated from an Erd{\"o}s-R{\'e}nyi graph with edge probability 0.25, represents the dynamics of $N$ robots. The set $S$ of states is partitioned into two sets of same size and labeled with $s_1\in AP$ and $s_2\in AP$, Each robot is assigned an initial state that is randomly selected from those labeled with $s_1$. Three goal regions are created such that each has $\frac{|S|}{10}$ randomly selected states and are labeled with $g_i \in AP$ for $i=1,2,3$.
The specification is given by the cLTL formula $\mu$:
\begin{equation}
\mu = \left( \eventually\always [s_2, N/2] \right) \land \left( \bigwedge_{i=1}^3 \always\eventually[g_i,N/3]\right).
\end{equation}

The specification $\mu$ requires at least half of the robots to reach states marked by $s_2$ and stay there indefinitely. Also, each goal region must be populated by at least $N/3$ robots, infinitely often over time. The results in Table \ref{tab:results2} are obtained by varying either the number of robots $N=10$ or the time horizon $h=20$ while keeping all the other parameters intact. Solution times in the first and second column are obtained by alternative cLTL encodings proposed in Section \ref{chp:solcLTL} and regular cLTL+ encodings, respectively. Regular cLTL+ encodings could not find solutions for $N=500$ within the timeout threshold of $60$ minutes. On the other hand, cLTL encodings scale much better with the number of robots and easily handle hundreds of robots in a matter of seconds. In fact, solution times are almost unaffected by the number of robots.

\begin{table}
  \centering 
  \caption{Numerical results}  \label{tab:results}
  \renewcommand{\arraystretch}{1.2}
  \begin{tabular}{c c|c|c|c}
    & & \textbf{cLTL+} & \textbf{cLTL+} & \textbf{cLTL+}\\ 
    && \textbf{(regular)} & \textbf{(grid)} & \textbf{(continuous)}  \\ \hline
    \multirow{5}{*}{\textbf{N}}& $4$ & $10.54$  & $9.74$ & $72.04$\\
    &$6$ & $33.86$  & $17.87$ & $86.47$  \\ 
    &$8$ & $1974.7$  & $155.36$ & $2286$  \\ 
    &$10$ & $992.02$  & $265.38$ & $132.18$  \\\hline
    \multirow{7}{*}{\textbf{h}}& $30$ & $389.97$  & $853.75$ & $108.87$\\
    &$35$ & $992.02$  & $265.38$ & $132.18$  \\ 
    &$40$ & $2788.1$  & $556.51$ & $181.94$  \\ 
    &$45$ & $1842.2$  & $201.45$ & $274.3$  \\
    &$50$ & $2505$  & $238.86$ & $257.77$  \\ 
    &$55$ & $2436.3$  & $334.97$ & $364.48$  \\ 
    &$60$ & $3828.1$  & $266.19$ & $440.68$  \\ \hline
    \multirow{3}{*}{\textbf{$\tau$}}&$0$ & $992.02$  & $265.38$ & $132.18$ \\ 
    &$1$ & $9367.1$  & $507.96$ & $829.41$ \\             &$2$ & $34222.1$  & $323.74$ & $18234$\\ \hline
  \end{tabular}
\end{table}

\begin{table}
  \centering 
  \caption{Numerical results}  \label{tab:results2}
  \renewcommand{\arraystretch}{1.2}
  \begin{tabular}{c c|c|c}
    & & \textbf{cLTL} & \textbf{cLTL+}  \\  \hline
    \multirow{5}{*}{\textbf{N}}& $10$ & $10.86$  & $2.64$\\
    &$20$ & $10.12$  & $5.87$  \\ 
    &$50$ & $8.99$  & $56.13$   \\ 
    &$500$ & $12.72$  & $TO$ \\\hline
    \multirow{3}{*}{\textbf{h}}& $20$ & $10.86$  & $2.64$\\
    &$40$ & $26.84$  & $5.32$  \\ 
    &$60$ & $60.93$  & $7.87$ \\ \hline
  \end{tabular}
\end{table}

\section{Conclusions}

In this paper we presented counting temporal logics (cLTL and cLTL+) that are convenient for specifying desired behaviors for multirobot systems. We also proposed an optimization-based trajectory generation method to synthesize collective behaviors that satisfy specifications given in these formalisms. Furthermore, we showed how to generate trajectories that are robust to bounded asynchrony. We then discussed how to handle continuous-state systems and extended the cLTL+ syntax so that tasks can be assigned to a subset of robots. As numerical results suggest, solution times depend greatly on the specific method for encoding specifications. One possible direction for future research is to discover relevant applications and develop encodings tailored specifically to them. Finally, while the proposed techniques are shown to scale well with the number of robots, scalability with respect to the size of the transition system of the individual robots and with respect to the robustness parameter $\tau$ remains a challenge, which we are working on addressing via hierarchical approaches \cite{sahin2018multi}.

\ifCLASSOPTIONcaptionsoff
  \newpage
\fi

\bibliographystyle{abbrv}
\bibliography{ref.bib}

\begin{thebibliography}{10}

\bibitem{arikan2001efficient}
O.~Arikan, S.~Chenney, and D.~A. Forsyth.
\newblock Efficient multi-agent path planning.
\newblock {\em Computer Animation and Simulation 2001}, pages 151--162, 2001.

\bibitem{baier}
C.~Baier and J.~Katoen.
\newblock {\em Principles of Model Checking}.
\newblock MIT Press, 1999.

\bibitem{banfi2018multirobot}
J.~Banfi, N.~Basilico, and F.~Amigoni.
\newblock Multirobot reconnection on graphs: Problem, complexity, and
  algorithms.
\newblock {\em IEEE Transactions on Robotics}, 34(5):1299--1314, Oct 2018.

\bibitem{belta2017formal}
C.~Belta, B.~Yordanov, and E.~A. Gol.
\newblock {\em Formal Methods for Discrete-Time Dynamical Systems}.
\newblock Springer, 2017.

\bibitem{bhatia2010sampling}
A.~Bhatia, L.~E. Kavraki, and M.~Y. Vardi.
\newblock Sampling-based motion planning with temporal goals.
\newblock In {\em Proceedings IEEE ICRA}, pages 2689--2696, 2010.

\bibitem{biere2linear}
A.~Biere, K.~Heljanko, T.~Junttila, T.~Latvala, and V.~Schuppan.
\newblock Linear encodings of bounded {LTL} model checking.
\newblock {\em Logical Methods in Computer Science}, 2:1--64, 2006.

\bibitem{chen2012formal}
Y.~Chen, X.~C. Ding, A.~Stefanescu, and C.~Belta.
\newblock Formal approach to the deployment of distributed robotic teams.
\newblock {\em IEEE Transactions on Robotics}, 28(1):158--171, 2012.

\bibitem{cortes2006robust}
J.~Cort{\'e}s, S.~Mart{\'\i}nez, and F.~Bullo.
\newblock Robust rendezvous for mobile autonomous agents via proximity graphs
  in arbitrary dimensions.
\newblock {\em IEEE Transactions on Automatic Control}, 51(8):1289--1298, 2006.

\bibitem{Desai:2017:DFS}
A.~Desai, I.~Saha, J.~Yang, S.~Qadeer, and S.~A. Seshia.
\newblock Drona: A framework for safe distributed mobile robotics.
\newblock In {\em Proceedings of the 8th International Conference on
  Cyber-Physical Systems}, ICCPS '17, pages 239--248, New York, NY, USA, 2017.
  ACM.

\bibitem{donze2010robust}
A.~Donz{\'e} and O.~Maler.
\newblock Robust satisfaction of temporal logic over real-valued signals.
\newblock In {\em International Conference on Formal Modeling and Analysis of
  Timed Systems}, pages 92--106. Springer, 2010.

\bibitem{gray2012predictive}
A.~Gray, Y.~Gao, T.~Lin, J.~K. Hedrick, H.~E. Tseng, and F.~Borrelli.
\newblock Predictive control for agile semi-autonomous ground vehicles using
  motion primitives.
\newblock In {\em American Control Conference (ACC), 2012}, pages 4239--4244.
  IEEE, 2012.

\bibitem{guo2015multi}
M.~Guo and D.~V. Dimarogonas.
\newblock Multi-agent plan reconfiguration under local ltl specifications.
\newblock {\em The International Journal of Robotics Research}, 34(2):218--235,
  2015.

\bibitem{gurobi}
I.~Gurobi~Optimization.
\newblock Gurobi optimizer reference manual, 2016.

\bibitem{hu2012robust}
G.~Hu.
\newblock Robust consensus tracking of a class of second-order multi-agent
  dynamic systems.
\newblock {\em Systems \& Control Letters}, 61(1):134--142, 2012.

\bibitem{jolly2009bezier}
K.~Jolly, R.~S. Kumar, and R.~Vijayakumar.
\newblock A bezier curve based path planning in a multi-agent robot soccer
  system without violating the acceleration limits.
\newblock {\em Robotics and Autonomous Systems}, 57(1):23--33, 2009.

\bibitem{kantaros2018control}
Y.~Kantaros, B.~V. Johnson, S.~Chowdhury, D.~J. Cappelleri, and M.~M. Zavlanos.
\newblock Control of magnetic microrobot teams for temporal micromanipulation
  tasks.
\newblock {\em IEEE Transactions on Robotics}, pages 1--18, 2018.

\bibitem{kloetzer2010automatic}
M.~Kloetzer and C.~Belta.
\newblock Automatic deployment of distributed teams of robots from temporal
  logic motion specifications.
\newblock {\em IEEE Transactions on Robotics}, 26(1):48--61, 2010.

\bibitem{kress2009temporal}
H.~Kress-Gazit, G.~E. Fainekos, and G.~J. Pappas.
\newblock Temporal-logic-based reactive mission and motion planning.
\newblock {\em IEEE Transactions Robotics}, 25(6):1370--1381, 2009.

\bibitem{Kumar2000TFM}
S.~Kumar and P.~R. Cohen.
\newblock Towards a fault-tolerant multi-agent system architecture.
\newblock In {\em Proceedings of the Fourth International Conference on
  Autonomous Agents}, AGENTS '00, pages 459--466, New York, NY, USA, 2000. ACM.

\bibitem{macwan2015multirobot}
A.~Macwan, J.~Vilela, G.~Nejat, and B.~Benhabib.
\newblock A multirobot path-planning strategy for autonomous wilderness search
  and rescue.
\newblock {\em IEEE Transactions Cybernetics}, 45(9):1784--1797, 2015.

\bibitem{mellinger2011minimum}
D.~Mellinger and V.~Kumar.
\newblock Minimum snap trajectory generation and control for quadrotors.
\newblock In {\em Robotics and Automation (ICRA), 2011 IEEE International
  Conference on}, pages 2520--2525. IEEE, 2011.

\bibitem{mesbahi2010graph}
M.~Mesbahi and M.~Egerstedt.
\newblock {\em Graph theoretic methods in multiagent networks}.
\newblock Princeton University Press, 2010.

\bibitem{moarref2017decentralized}
S.~Moarref and H.~Kress-Gazit.
\newblock Decentralized control of robotic swarms from high-level temporal
  logic specifications.
\newblock In {\em Multi-Robot and Multi-Agent Systems (MRS), 2017 International
  Symposium on}, pages 17--23. IEEE, 2017.

\bibitem{nagatani2013emergency}
K.~Nagatani, S.~Kiribayashi, Y.~Okada, K.~Otake, K.~Yoshida, S.~Tadokoro,
  T.~Nishimura, T.~Yoshida, E.~Koyanagi, M.~Fukushima, et~al.
\newblock Emergency response to the nuclear accident at the fukushima daiichi
  nuclear power plants using mobile rescue robots.
\newblock {\em Journal of Field Robotics}, 30(1):44--63, 2013.

\bibitem{nilsson2016control}
P.~Nilsson and N.~Ozay.
\newblock Control synthesis for large collections of systems with mode-counting
  constraints.
\newblock In {\em Proceedings of the 19th International Conference on Hybrid
  Systems: Computation and Control}, pages 205--214. ACM, 2016.

\bibitem{nilsson2018control}
P.~Nilsson and N.~Ozay.
\newblock Control synthesis for permutation-symmetric high-dimensional systems
  with counting constraints.
\newblock {\em IEEE Transactions on Automatic Control}, 2018.
\newblock accepted.

\bibitem{paranjape2015motion}
A.~A. Paranjape, K.~C. Meier, X.~Shi, S.-J. Chung, and S.~Hutchinson.
\newblock Motion primitives and 3d path planning for fast flight through a
  forest.
\newblock {\em The International Journal of Robotics Research}, 34(3):357--377,
  2015.

\bibitem{petersen2011termes}
K.~Petersen, R.~Nagpal, and J.~Werfel.
\newblock Termes: An autonomous robotic system for three-dimensional collective
  construction.
\newblock {\em Proceedings Robotics: Science \& Systems VII}, 2011.

\bibitem{pickem2017robotarium}
D.~Pickem, P.~Glotfelter, L.~Wang, M.~Mote, A.~Ames, E.~Feron, and
  M.~Egerstedt.
\newblock The robotarium: A remotely accessible swarm robotics research
  testbed.
\newblock In {\em Robotics and Automation (ICRA), 2017 IEEE International
  Conference on}, pages 1699--1706. IEEE, 2017.

\bibitem{pola2008approximately}
G.~Pola, A.~Girard, and P.~Tabuada.
\newblock Approximately bisimilar symbolic models for nonlinear control
  systems.
\newblock {\em Automatica}, 44(10):2508--2516, 2008.

\bibitem{raman2014reactive}
V.~Raman.
\newblock Reactive switching protocols for multi-robot high-level tasks.
\newblock In {\em 2014 IEEE/RSJ International Conference on Intelligent Robots
  and Systems}, pages 336--341. IEEE, 2014.

\bibitem{saha2016implan}
I.~Saha, R.~Ramaithitima, V.~Kumar, G.~J. Pappas, and S.~A. Seshia.
\newblock Implan: Scalable incremental motion planning for multi-robot systems.
\newblock In {\em 2016 ACM/IEEE 7th International Conference on Cyber-Physical
  Systems (ICCPS)}, pages 1--10. IEEE, 2016.

\bibitem{sahin2017provably}
Y.~E. Sahin, P.~Nilsson, and N.~Ozay.
\newblock Provably-correct coordination of large collections of agents with
  counting temporal logic constraints.
\newblock In {\em Proceedings of the 8th International Conference on
  Cyber-Physical Systems}, pages 249--258. ACM, 2017.

\bibitem{sahinsynchronous}
Y.~E. Sahin, P.~Nilsson, and N.~Ozay.
\newblock Synchronous and asynchronous multi-agent coordination with cltl+
  constraints.
\newblock In {\em Proceedings of CDC}, pages 335--342. IEEE, 2017.

\bibitem{sahin2018multi}
Y.~E. Sahin, N.~Ozay, and S.~Tripakis.
\newblock Multi-agent coordination subject to counting constraints: A
  hierarchical approach.
\newblock In {\em Proceedings of the 14th International Symp. on Distributed
  Autonomous Robotic Systems (DARS)}, 2018.

\bibitem{6160957}
G.~Shi and K.~H. Johansson.
\newblock Multi-agent robust consensus-part i: Convergence analysis.
\newblock In {\em 2011 50th IEEE Conference on Decision and Control and
  European Control Conference}, pages 5744--5749, Dec 2011.

\bibitem{ulusoy2012robust}
A.~Ulusoy, S.~L. Smith, X.~C. Ding, and C.~Belta.
\newblock Robust multi-robot optimal path planning with temporal logic
  constraints.
\newblock In {\em Robotics and Automation (ICRA), 2012 IEEE International
  Conference on}, pages 4693--4698. IEEE, 2012.

\bibitem{wang2017safety}
L.~Wang, A.~D. Ames, and M.~Egerstedt.
\newblock Safety barrier certificates for collisions-free multirobot systems.
\newblock {\em IEEE Transactions on Robotics}, 33(3):661--674, June 2017.

\bibitem{wongpiromsarn2010formal}
T.~Wongpiromsarn.
\newblock {\em Formal methods for design and verification of embedded control
  systems: application to an autonomous vehicle}.
\newblock PhD thesis, Citeseer, 2010.

\bibitem{xu2016census}
Z.~Xu and A.~A. Julius.
\newblock Census signal temporal logic inference for multiagent group behavior
  analysis.
\newblock {\em IEEE Transactions on Automation Science and Engineering},
  PP(99):1--14, 2016.

\bibitem{yu2016optimal}
J.~Yu and S.~M. LaValle.
\newblock Optimal multirobot path planning on graphs: Complete algorithms and
  effective heuristics.
\newblock {\em IEEE Transactions on Robotics}, 32(5):1163--1177, Oct 2016.

\bibitem{zhou1998essentials}
K.~Zhou and J.~C. Doyle.
\newblock {\em Essentials of robust control}, volume 104.
\newblock Prentice hall Upper Saddle River, NJ, 1998.

\end{thebibliography}

\appendix

\subsection{Proof of Theorem \ref{prop:robustness_is_sound}}

First of all, note that $\zr{n}{\phi}t = 1$ if and only if  $\z{n}{\phi}{t+k} = 1$ for all $k \in [0,\tau]$ due to \eqref{eq:ap_robust{t}}. That is, $\zr{n}{\phi} t = 1$ implies that robot $\mathcal{R}_n$ satisfies the inner formula $\phi$ for $\tau+1$ consecutive steps, starting from time $t$. By the restriction of formulas to PNF, it is enough to prove the soundness for the operators in \eqref{eq:logic_restr} and we do so recursively, starting with temporal counting propositions. 

\emph{tcp:}
Let $\mu = [\phi, m] \in \Phi \times \mathbb{N}$ and a collection $ \Pi = \{\traj{1}, \dots, \traj{N}\}$ of trajectories be given. We first show that $\y{\mu}{t} = 1$ implies that $\Pi$ $\tau$-robustly satisfies $\mu$ at anchor time $t$. Assume $m> 1$ and $\y{\mu}{t} = 1$. Then $\sum_{n=1}^N \zr{n}{\phi}{t} \geq m$ due to \eqref{eq:cp_robust}. Without loss of generality, assume that robots are enumerated such that the first $m$ robots robustly satisfy $\phi$ at time step $t$, i.e., $\zr{n}{\phi}{t} = 1$ for all $n \in [m]$. Then $\z{n}{\phi}{t+k} = 1$ for all $n \in [m]$ and for all $k \in [0, \tau]$ due to equation \eqref{eq:ap_robust{t}}. Now let $K \in \mathcal{K}_N(\tau)$ be an arbitrary $\tau$-bounded execution and $T$ be an arbitrary time step with anchor time $t$, i.e., $b_K(T)=t$. By definition of $\tau$-bounded executions, local times are restricted to $t\leq \ktn n T \leq t+\tau$. Then, $\sum_{n=1}^N \z{n}{\phi}{\ktn n T} \geq \sum_{n=1}^m \z{n}{\phi}{\ktn n T} = m$. Hence, $(\Pi, K), T \models_\tau \mu$. Note that this is true for all  for all $K \in \mathcal{K}_N(\tau)$ and for all $T\in b_K^{-1}(t)$. Thus, $\Pi,t \models_\tau \mu$ by definition of robust satisfaction.

Now assume $m=1$ and $\y{\mu}{t} = 1$. Due to \eqref{eq:cp_robust1}, either $\sum_{n=1}^N \zr{n}{\phi}{t} \geq 1$ or $\sum_{n=1}^N \z{n}{\phi}{t} = N$. If the former is true, earlier arguments apply. Then, assume the latter is true, that is, $\z{n}{\phi}{t} =1$ for all $n$. Let $K=[k_1\dots k_N]^T \in \mathcal{K}_N(\tau)$ be arbitrary. At anchor time $t$, there exists at least one robot such that $\ktn n T = t$. Without loss of generality assume $\ktn 1 T = t$. Then $\sum_{n=1}^N \z{n}{\phi}{\ktn n T} \geq \z{n}{\phi}{\ktn 1 T} = \z{n}{\phi}{t} = 1$, hence $\Pi,t \models_\tau \mu$. These arguments hold for any $t$, including $t=0$, hence the modified encodings in \eqref{eq:cp_robust}-\eqref{eq:cp_robust1} are sound for temporal counting propositions.

\emph{conjunction:} Showing soundness for conjunction is straightforward. Assume $\mu = \bigwedge \mu_i$ and for a collection $\Pi = \{\traj{1}, \dots, \traj{N}\}$, $\y{\mu}{t} = 1$ for some $t$. Then $\y{\mu_i} t = 1$ for all $i$, implying that $\Pi,t \models_\tau \mu_i$. In other words, for all $K\in \mathcal{K}_N(\tau)$ and for all $T \in b^{-1}_K(t);\; (\Pi,K),T \models \mu_i$ for all $i$. Hence $\Pi,t \models_\tau \mu$.

\emph{disjunction:} Let $\mu = \bigvee_i \mu_i$ and $\y{\mu}{t} = 1$ for some $t$ and some collection $\Pi = \{\traj{1}, \dots, \traj{N}\}$. Since disjunction is associative and commutative, we can rewrite $\mu = \mu_{tcp} \lor \mu_o$ where $\mu_{tcp} = \bigvee_i [\phi_i,m_i]$ is conjunction of $tcp$ and $\mu_o$ is the disjunction of the rest of the clauses that are not $tcp$. We first show that encoding of disjunction of temporal counting propositions is sound. If $\y {\mu}t = 1$, then either $\y {\mu_i}t = 1$ for some $i$, or $\sum_{n=1}^N \zr{n}{(\bigvee_{i} \phi_i)} t > \sum_{i} (m_i-1)$. If it is the former, $\y {\mu_i}t = 1$ for some $\mu_i = [\phi_i, m_i]$, then it follows from the soundness of $tcp$ encodings that $\Pi,t \models_\tau \mu_i$. Thus $\Pi,t \models_\tau \mu$. Now assume $\y{\mu_i}{t} =0$ for all $i$ and $\sum_{n=1}^N \zr{n}{(\bigvee_{i} \phi_i)} t > \sum_{i} (m_i-1)$. Note that $\zr{n}{(\bigvee_{i} \phi_i)}t = 1$ implies that for each $k \in [0,\tau]$, there exists at least one $\phi_i$ such that $\traj{n},t+k\models \phi_i$. Now for arbitrary set of local indices $\{\ktn n T\}$ such that $t\leq \ktn n T \leq t+\tau$, let $\tilde m_i$ be the number of robots who satisfy $\phi_i$, i.e., $\tilde m_i \doteq |\{n \mid  \z{n}{\phi_i}{\ktn n T }=1\}|$. Then $\sum_i \tilde m_i \geq \sum_{n=1}^N \zr{n}{(\bigvee_{i} \phi_i)}t > \sum_{i} (m_i-1)$. Note that if $\tilde m_i< m_i$ for all $i$, the last inequality cannot be true. Hence, there exists at least one $\tilde m_i \geq m_i$. As a result, $\Pi,t\models\mu_i$ for at least one $\mu_i$ and $\Pi,t\models\mu$. 

Showing soundness of \eqref{eq:dis_robust2} is straightforward and omitted here. All of these combined together proves the correctness of \eqref{eq:dis_robust} and \eqref{eq:dis_robust2}.

\emph{until:} Until encodings are quite close to standard encodings but the modification is needed due to change in disjunction encodings. Let $\eta = \mu_1 \until \mu_2$ and $\Pi = \{\traj{1}, \dots, \traj{N}\}$ be a collection. If $\y{\mu_2}t = 1$ for $\Pi$ and some $t$, then $\Pi,t\models\mu_2$ and $\Pi,t\models\eta$. Now assume $\y{\mu_2}t \not= 1$. The first line in equation \eqref{eq:until_robust} requires $\y{\mu_1 \lor \mu_2}t= 1$ and $\y\eta{t+1}=1$, for $\y\eta t=1$ to hold. Then $\Pi,t \models_\tau \mu_1\lor \mu_2$ and $\Pi, t+1\models_\tau \mu_1\until \mu_2$. This implies that $\Pi, t\models_\tau \mu_1\until \mu_2$. Similar to standard encodings, auxiliary variables are used to avoid trivial satisfaction and make sure $\mu_2$ is satisfied at some point.

Proving that the ``release'' operator encodings are also sound is similar to ``until'' case and omitted here.

We showed that outer logic encodings are sound. The soundness of the whole encoding procedure follows as before from soundness of ILP encodings of LTL, which is used for inner logic formulas. \qed 
\vspace{-3mm}

\subsection{Proof of Theorem \ref{prop:robustness_is_complete}}

We first give an {outline} of the proof and then provide details. The proof starts by showing that the modified encodings are complete for the simplest specification, $\mu = [\phi,m]$. We then show that conjunction and next operators preserve completeness. Next, we show that disjunction and until operators are complete for mutually exclusive atomic propositions. That is enough to prove Theorem \ref{prop:robustness_is_complete} due to the special form of specifications and the second assumption that atomic propositions are mutually exclusive. We now give details of these steps.

\emph{tcp:} Let $\mu = [\phi,m]$ be a temporal counting proposition and $\Pi = \{\traj{1}, \dots, \traj{N}\}$ be a collection such that $\Pi,t \models_\tau \mu$ for some $t$. We are going to show that if \eqref{eq:cp_robust} (or \eqref{eq:cp_robust1} for $m=1$) does not hold for some $t$, then $\Pi,t \not\models_\tau \mu$. First assume $m>1$ and $\sum_{n=1}^N \zr{n}{\phi}t < m$. Assume without loss of generality that robots are enumerated such that $\zr{n}{\phi}t = 0$ at least for the first $N-m+1$ robots. Then, for all $n \in [N-m+1]$, there exist at least one $\z{n}{\phi}{t+k} = 0$ for some $k \in \{0,1,\dots,\tau\}$. Assume each $\hat{t}_n$ denotes the first instance where $\z{n}{\phi}{\hat{t}_n} = 0$ for $\hat{t}_n \in [t,t+\tau]$ and for all $n \in [N-m+1]$. Then, there exists a $\tau$-bounded execution $K=[k_1\dots k_N]^T \in \mathcal{K}_N(\tau)$ such that $\ktn n T = {\hat{t}_n} $ for all $n \in [N-m+1]$ and $\ktn{N}T = t$ for some $T$. Note that such a $\Pi$ violates \eqref{def:robust_satisfaction} and creates a contradiction. Thus $\sum_{n=1}^N \zr{n}{\phi}t \geq m$ must hold.

In the special case when $m=1$, further assume that $\sum_{n=1}^N \z{n}{\phi}t < N$. This implies that, for each $n\in [N]$, $\z{n}{\phi}{\ktn n T} =0$ for some $T$ where $t \leq \ktn n T \leq t+\tau$ and there exists at least one robot $\tilde n$ such that $\z{\tilde n}{\phi}{t}=0$. Then choose $\ktn{\tilde n}T = t$ and for all other robots choose $\ktn{n}T$ such that $\z{n}{\phi}{\ktn n T} =0$. These set of indices have the anchor time $t$ and satisfy the $\tau$-boundedness criteria. Hence, there exists a $\tau$-bounded asynchronous execution such that $\mu$ is not satisfied. But this is a contradiction. Thus either  $\sum_{n=1}^N \zr{n}{\phi}t \geq 1$ or $\sum_{n=1}^N \z{n}{\phi}t = N$ must hold.

\emph{disjunction:} For the sake of ease, we show that \eqref{eq:dis_robust} is complete for disjunction of two temporal counting propositions. Let $\mu_i = [\phi_i, m_i]$ for $i=1,2$ and $\mu = \mu_1 \lor \mu_2$. Assume that \eqref{eq:dis_robust} fails to hold for some $t$, but that there exists a collection $\Pi = \{\traj{1}, \dots, \traj{N}\}$ such that $\Pi,t \models_\tau \mu$. This implies that, for all local time permutations with anchor time $t$, i.e., $\ktn n T \in [t, t+\tau]$ and $\min \ktn n T = t$, we have $\sum_{n} \z{n}{\phi_i}{\ktn n T} \geq m_i$ for either $i = 1$ or $i = 2$. Since \eqref{eq:dis_robust} fails to hold, we have $\sum_{n=1}^N \zr{n}{(\phi_1\lor \phi_2)}t  < m_1 + m_2 - 1$ which implies that $\sum_{n=1}^N \zr{n}{ \phi_i}t  < m_i$ for $i=1,2$. Now without loss of generality, enumerate robots such that $\zr{n}{(\phi_{1}\lor \phi_2)}{t}=1$ only for the first $n_{12}$ robots. This implies that, for the rest of the robots, one can choose a local time where both $\phi_1$ and $\phi_2$ fails to hold. Furthermore, assume that $\zr{n}{\phi_{1}}{t}$ holds for the first $n_{1}$ robots and that $\zr{n}{\phi_{2}}{t}$ holds for the following $n_{2}$ robots. Since $AP$ are mutually exclusive, no robot can satisfy $\phi_1$ and $\phi_2$ at the same time. Then, starting from the $(n_1+n_2+1)^{th}$ robot, choose as local times the first $m_1-n_1-1$ such that $\z{n}{\phi_{1}}{\ktn n T}=1$. For the rest of the robots, until $n_{12}^{th}$, choose local times such that $\z{n}{\phi_{2}}{\ktn n T}=1$. Note that such selection always exists. Then $\sum_{n} \z{n}{\phi_{1}}{\ktn n T} = m_1-1,$ and $\sum_{n} \z{n}{\phi_{2}}{\ktn n T} = r_{12} - \sum_{n} \z{n}{\phi_{1}}{\ktn n T} = r_{12} - (m_1 -1) < m_1 + m_2 - 1 - (m_1 -1)\nonumber < m_2.$

Note that we can always choose $\ktn 1 T = t$. This is contradictory to the assumption that $\mu$ is $\tau$-robustly satisfied. Thus, we conclude that \eqref{eq:dis_robust} is necessary for $\mu$ to be satisfied.

Completeness for conjunction and next operators follows from the completeness of standard ILP encodings for bounded model checking. Completeness of until operator follows from completeness of conjunction and disjunction operators.
\qed

\end{document}